\newcommand{\mc}[1]{\mathcal{#1}}
\newcommand{\mbf}[1]{\mathbf{#1}}
\newcommand{\E}{\mathbb{E}}
\newcommand{\R}{\mathbb{R}}
\newcommand{\1}{\mathbf{1}}
\newtheorem{theorem}{Theorem}[section]
\newtheorem{lemma}[theorem]{Lemma}
\newtheorem{corollary}[theorem]{Corollary}
\newtheorem{assumption}[theorem]{Assumption}
\newtheorem{remark}[theorem]{Remark}
\title{Accelerating Decentralized Optimization via Overlapping Local Steps}
\author{%
  Yijie Zhou \\
  School of Data Science\\
  The Chinese University of Hong Kong, Shenzhen\\
  \texttt{yijiezhou@link.cuhk.edu.cn} \\
  \And
  Shi Pu \\
  School of Data Science\\
  The Chinese University of Hong Kong, Shenzhen\\
  \texttt{shipu@cuhk.edu.cn} \\
}
\begin{document}

\maketitle

\begin{abstract}
  Decentralized optimization has emerged as a critical paradigm for distributed learning, enabling scalable training while preserving data privacy through peer-to-peer collaboration. However, existing methods often suffer from communication bottlenecks due to frequent synchronization between nodes. We present Overlapping Local Decentralized SGD (OLDSGD), a novel approach to accelerate decentralized training by computation-communication overlapping, significantly reducing network idle time. With a deliberately designed update, OLDSGD preserves the same average update as Local SGD while avoiding communication-induced stalls. Theoretically, we establish non-asymptotic convergence rates for smooth non-convex objectives, showing that OLDSGD retains the same iteration complexity as standard Local Decentralized SGD while improving per-iteration runtime. Empirical results demonstrate OLDSGD's consistent improvements in wall-clock time convergence under different levels of communication delays. With minimal modifications to existing frameworks, OLDSGD offers a practical solution for faster decentralized learning without sacrificing theoretical guarantees.
\end{abstract}

\section{Introduction}
\label{sec:intro}
In the era of large-scale machine learning, the growing size of training datasets and their frequent distribution across multiple locations have made distributed training a necessity. Traditional centralized approaches—where data is aggregated into a single location—often face critical bottlenecks, including memory constraints, communication overhead, and privacy concerns \cite{verbraeken2020survey}. Distributed training methods, such as federated learning \cite{mcmahan2017communication} and data-parallel training \cite{li2020pytorch} mitigate there issues by enabling collaborative model updates across nodes while keeping data localized. However, these approaches typically rely on a central coordinator (e.g., a parameter server) to synchronize updates, which can become a single point of failure or a communication bottleneck \cite{kairouz2021advances}. 
Decentralized optimization (DO) addresses these limitations by enabling peer-to-peer collaboration without a central coordinator, leveraging local computations and neighbor-to-neighbor communication \cite{nedic2009distributed}. The goal of DO is to optimize a sum of local objectives:
\begin{equation}
\label{eq:DO}
    \min_{x\in \R^d} f(x) = \sum_{i=1}^n f_i(x),
\end{equation}
where $n$ is the number of agents and $d$ is the dimension of the problem, and $f_i(x)\triangleq \E_{\xi\sim \mc{D}_i} F(x;\xi)$ represents the local loss for agent $i$ with data distribution $\mc{D}_i$. Agents solves (\ref{eq:DO}) collaboratively by communicating only with their neighbors in set $\mc{N}_i$.

A key challenge in distributed training is communication overhead, as synchronization causes idle time due to waiting \cite{ben2019demystifying}. Techniques like local updates \cite{stich2018local} and communication-computation overlapping \cite{wang2018cooperative} reduce synchronization frequency and mask communication delays by using stale information (e.g., gradients) for updates. However, staleness can degrade convergence, especially for non-convex objectives, as outdated gradients may misalign with the current optimization direction \cite{arjevani2020tight}.

In decentralized training, communication-computation overlapping remains underexplored. While \cite{wang2024promise} mentions that Decentralized SGD (DSGD) with the combine-then-adapt (CTA) scheme can support overlapping of communication and computation, this idea is not explored in depth, and naive overlapping only brings limited speedup when communication is slow. Building on the potential of DSGD-CTA for overlapping, we propose Overlapping Local DSGD (OLDSGD), the first decentralized training algorithm that combines local updates with communication-computation overlapping. By carefully modifying the update rule, OLDSGD fully masks communication delays without altering the average update, mitigating staleness issues prevalent in distributed training \cite{zhou2021communication}. This enables faster, more robust decentralized training.

\subsection{Related work}
OLDSGD is the first decentralized algorithm to integrate communication-computation overlapping with local updates, reducing communication delays while maintaining convergence guarantee. To contextualize its novelty, we first review distributed training methods that employ similar techniques but rely on centralized coordination. Then we examine decentralized local methods that do not incorporate overlapping, highlighting the unique contribution of OLDSGD.

\subsubsection{Distributed overlapping local methods}
Prior work on overlapping local methods, including  Overlap-Local-SGD \cite{wang2020overlap}, CoCoD-SGD \cite{shen2019faster}, and Co2 \cite{sun2024co2}, fundamentally relies on globally averaged model updates. This architectural choice inherently restricts their applicability to arbitrary network topologies. In Overlap-Local-SGD, agents perform local updates in parallel while periodically communicating to compute a stale global average (referred to as the anchor point). This mechanism enables overlapping computation and communication, with local models subsequently being adjusted toward the anchor to maintain consensus. Similarly, CoCoD-SGD permits nodes to compute with stale gradients while exchanging updates, resembling OLDSGD in fully-connected topologies. Co2 takes a different approach by leveraging asynchronous communication patterns to reduce training time, though this is achieved within centralized system architectures. In contrast, OLDSGD operates with the average model over its neighbors, making it uniquely suitable for fully decentralized settings with arbitrary network structures.


Co2 and most distributed overlapping methods \cite{zhu2021delayed, chen2023workie, kale2025eager} introduce bias in their updates due to the use of stale gradients. These approaches employ various compensation techniques to mitigate the performance degradation caused by this bias: Co2 utilizes gradient clipping and staleness penalties, Delayed-SGD \cite{zhu2021delayed} implements adaptive gradient weighting, and Workie-Talkie \cite{chen2023workie} incorporates error compensation with staleness-aware scheduling. In contrast, OLDSGD employs a intrinsically different approach through its combine-then-adapt (CTA) scheme in decentralized settings. This design inherently achieves unbiased average model updates, thereby circumventing the need for additional correction mechanisms required by other methods.

\subsubsection{Decentralized local methods}
Local DSGD (LDSGD) \cite{koloskova2020unified} extends DSGD by allowing multiple local steps before communication, as illustrated below:
\begin{equation}
    \label{eq:LDSGD}
    x_i^t = \begin{cases}
        x_{i}^{t-1} - \alpha g^{t-1}_i,\text{ if } t \text{ mod } \tau \ne 0,\\
        \sum_{j \in \mc{N}_i} w_{ij}(x_j^{t-1} - \alpha g_i^{t-1}), \text{else},
    \end{cases}
\end{equation}
where $\alpha$ is the learning rate, $w_{ij}$ is the mixing weight, $\tau$ is the number of local steps, and $g_i^t\triangleq \nabla F(x_i^t,\xi_i^t)$ is the stochastic gradient. 

Local Gradient Tracking methods, such as KGT \cite{liu2024decentralized} and LUGT \cite{nguyen2023performance}, build upon the standard Gradient Tracking framework \cite{pu2021distributed} by incorporating local updates to reduce communication frequency. These methods employ correction terms to track the global gradient direction across workers, trying to mitigate the effect of data heterogeneity. Similarly, Local Exact Diffusion (LED) \cite{alghunaim2024local} extends the Exact Diffusion algorithm \cite{yuan2018exact} - which uses a modified consensus step to eliminate gradient correction bias - by allowing multiple local steps between communications. However, these methods do not investigate communication-computation overlap, while OLDSGD fully masks communication delays and maintain convergence guarantee.


\subsection{Main contributions}
\label{sec:contribution}
This paper presents Overlapping Local DSGD (OLDSGD), a novel decentralized training algorithm addressing communication bottlenecks in large-scale machine learning. Our main contributions are:

\begin{itemize}
    \item \textbf{Novel Algorithm}: We propose OLDSGD, the first decentralized training method to integrate local updates with communication-computation overlapping, fully masking communication delays without incurring overhead.
    \item \textbf{Theoretical Guarantees}: We prove that OLDSGD converges at the same rate as Local DSGD \cite{koloskova2020unified} under non-convex settings, maintaining theoretical validity despite overlapping communication.
    \item \textbf{Practical Impact}: We demonstrate OLDSGD's efficiency across multiple vision and language tasks, including VGG11, ResNet18 pretraining and GPT2 finetuning. OLDSGD constantly outperforms existing decentralized methods with significant speedups and beats Local SGD in certain scenarios.
\end{itemize}

These contributions enable faster, scalable decentralized training, overcoming limitations of prior methods reliant on centralized coordination or heavy communication overhead.

\section{Algorithm design}
\label{sec:alg_design}
Overlapping Local DSGD (OLDSGD) enables efficient decentralized training by integrating local updates with communication-computation overlapping. Specifically, each agent performs $\tau$ local gradient steps while exchanging local models with neighbors in $\mc{N}_i$. Upon receiving neighbors' models, the agents perform mixing using a doubly stochastic mixing matrix $W = [w_{ij}] \in \R^{n \times n}$ and applies a gradient update. Algorithm \ref{alg:OLDSGD} describes the implementation of OLDSGD.

\begin{algorithm}
\caption{Overlapping Local Decentralized Stochastic Gradient Descent (OLDSGD)}
\label{alg:OLDSGD}
\begin{algorithmic}[1]
\REQUIRE Number of agents \( N \), total iterations \( T \), step size \( \alpha \), local steps \( \tau \), mixing weights \( w_{ij} \), initial models \( x_i^0 \) for all agents \( i \).
\STATE \textbf{Initialize}: Each agent \( i \) starts sending the initial model \( x_i^0 \) to all neighbors in $\mc{N}_i$.
\FOR{\( t = 1 \) to \( T \)}
    \FOR{each agent \( i = 1 \) to \( N \) in parallel}
        \STATE Fetch data batch $\xi_i^{t-1}$ from $\mc{D}_i$.
        \STATE Compute local stochastic gradient: \( g_i^{t-1} = \nabla F(x_i^{t-1}, \xi_i^{t-1}) \).
        \IF{\(t\text{ mod } \tau \neq 0 \)}
            \STATE Perform local update: \( x_i^t = x_i^{t-1} - \alpha g_i^{t-1} \).
        \ELSE
            \STATE Receive \( x_j^{t-\tau} \) from all neighbors in $\mc{N}_i$.
            \STATE Perform consensus update with delayed gradients: 
            \STATE \( x_i^t = \sum_{j \in \mathcal{N}_i} w_{ij} x_j^{t-\tau} - \alpha \sum_{k=t-\tau}^{t-1} g_i^k \).
            \STATE Send  \( x_i^t \) to all neighbors in $\mc{N}_i$.
        \ENDIF
    \ENDFOR
\ENDFOR
\RETURN Final models \( x_i^T \) for all agents \( i \).
\end{algorithmic}
\end{algorithm}

Figure \ref{fig:schematic_OLDSGD} illustrates the communication and computation timelines of DSGD, Local DSGD (LDSGD), and Overlapping Local DSGD (OLDSGD) over a communication round with $\tau = 5$ local steps. In DSGD, agents wait for neighbor model exchanges at each iteration, incurring significant communication overhead. LDSGD reduces this by communicating once per round, shortening the waiting time. OLDSGD, leveraging communication-computation overlapping, eliminates waiting entirely \footnote{Communication is fully masked as long as communication is not over $\tau$ times slower than gradient computation.} by using stale models and the CTA scheme, enabling up to twice as many gradient updates per round. 

\begin{figure}[h] 
    \centering
    \begin{subfigure}{0.36\textwidth} 
        \includegraphics[width=\textwidth]{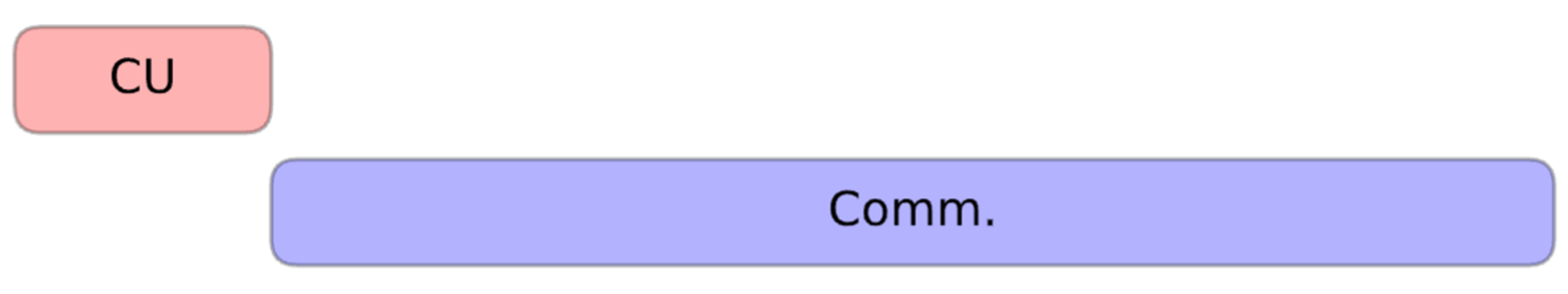}
        \subcaption{DSGD}
        \label{fig:image1}
    \end{subfigure}
    \\
    \begin{subfigure}{0.6\textwidth}
        \includegraphics[width=\textwidth]{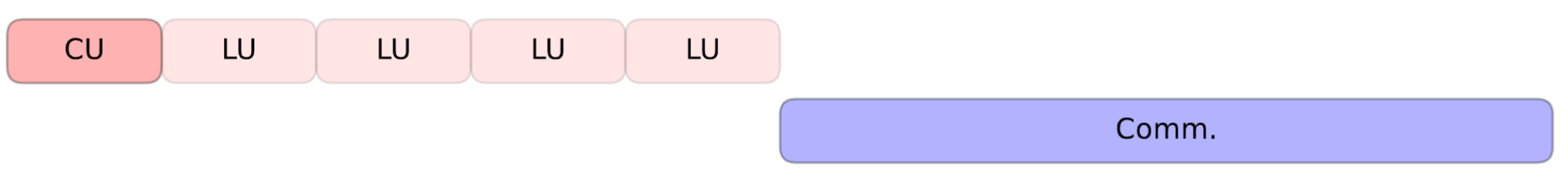}
        \subcaption{LDSGD}
        \label{fig:image2}
    \end{subfigure}
    \\
    \begin{subfigure}{0.6\textwidth}
        \includegraphics[width=\textwidth]{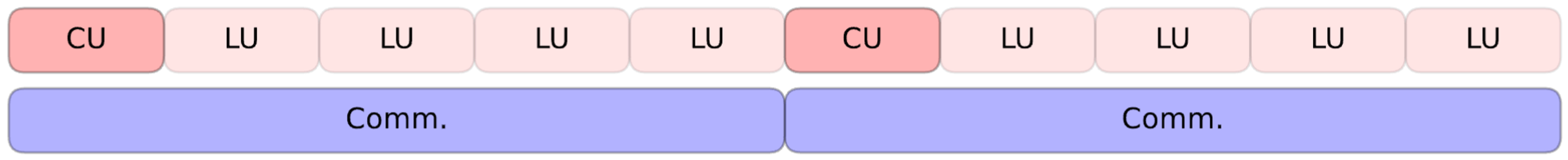}
        \subcaption{OLDSGD}
        \label{fig:image3}
    \end{subfigure}
    \caption{Communication-computation schematics of DSGD, LDSGD, and OLDSGD. CU: consensus update, LU: local update, Comm.: communicaiton.}
\label{fig:schematic_OLDSGD}
\end{figure}




The OLDSGD update rule is:
\begin{equation}
    \label{eq:OLDSGD}
    x_i^t = \begin{cases}
        x_i^{t-1} - \alpha g_i^{t-1}, & \text{if } t\text{ mod }\tau \neq 0, \\
        \sum_{j \in \mc{N}_i} w_{ij} x_j^{t-\tau} - \alpha \sum_{k=t-\tau}^{t-1} g_i^k, & \text{otherwise},
    \end{cases}
\end{equation}
where $x_i^t$ is agent $i$'s model at iteration $t$, $\alpha$ is the step size, and $g_i^k$ is the local stochastic gradient.

Compared to Local DSGD (equation (\ref{eq:LDSGD})) \cite{koloskova2020unified}\footnote{OLDSGD is not covered in the framework of \cite{koloskova2020unified} due to the use of stale models.}, OLDSGD’s communication step differs in two ways. First, it adopts the combine-then-adapt (CTA) scheme, using only local gradients $\sum_{k=t-\tau}^{t-1} g_i^k$, eliminating the need for fresh neighbor gradients. Second, it uses stale neighbor models $x_j^{t-\tau}$ instead of $x_j^{t-1}$, enabling communication to overlap with local gradient computations, thus masking communication delays.

A key property of OLDSGD is that its average model, $\bar{x}^t \triangleq \frac{1}{n} \sum_{i=1}^n x_i^t$, follows an SGD-like update:
\begin{equation}
    \label{eq:ave_OLDSGD}
    \bar{x}^t = \bar{x}^{t-1} - \frac{\alpha}{n} \sum_{i=1}^n g_i^{t-1}, \quad \forall t.
\end{equation}
To see why this is the case, for $t\text{ mod } \tau \neq 0$, the relation holds trivially. When $t$ is a multiple of $\tau$, the update becomes:
\begin{align*}
    \bar{x}^t &= \frac{1}{n} \sum_{i=1}^n \left( \sum_{j \in \mc{N}_i} w_{ij} x_j^{t-\tau} - \alpha \sum_{k=t-\tau}^{t-1} g_i^k \right) \\
    &= \bar{x}^{t-\tau} - \frac{\alpha}{n} \sum_{i=1}^n \sum_{k=t-\tau}^{t-1} g_i^k ,
\end{align*}
where the second equality comes from the double stochasticity of $W$ (i.e., $\sum_{j} w_{ij} = 1$, $\sum_{i} w_{ij} = 1$), and \eqref{eq:ave_OLDSGD} follows from the recursive application of local updates.

The SGD-like average update explains why OLDSGD avoids the gradient mismatch issue common in overlapping distributed methods \cite{zhu2021delayed,zhou2021communication}. The CTA scheme ensures compatibility with overlapping, minimizing performance degradation, making OLDSGD particularly effective for decentralized training.

Similar algorithmic design can be naively applied to Local Exact Diffusion \cite{alghunaim2024local} and Local Update Gradient Tracking \cite{nguyen2023performance}. However, their overlapping versions suffer significant convergence degradation, both in terms of convergence speed and the number of permissible local steps. The staleness error appears to have a substantial impact due to their more complex update mechanisms. Furthermore, the update structure of KGT \cite{liu2024decentralized} inherently prevents overlapping implementation. These observations explain why this work focuses exclusively on developing OLDSGD. However, note that full masking occurs only when all agents have homogeneous computation and communication speeds. Otherwise, stalls persist due to synchronization delays. Readers can find a detailed discussion of these comparative aspects in the Appendix.

\section{Convergence Analysis}
\label{sec:conv_res}
\subsection{Assumptions}
\label{sec:assu}
We state the assumptions for the convergence analysis of OLDSGD.

\begin{assumption} \label{as:d_sto}
The mixing matrix $W$ is doubly stochastic, and the corresponding communication graph is undirected and connected.
\end{assumption}

\begin{assumption}\label{as:L_smooth}
Each loss objective $f_i:\R^d \rightarrow \R$ is $L$-smooth, satisfying:
    \begin{equation*}
    \|\nabla f_i(x) - \nabla f_i(y)\| \le L \|x - y\|,\quad \forall x,y\in \R^d.
    \end{equation*}
\end{assumption}

\begin{assumption}\label{as:var}
For each agent $i$, the stochastic gradient variance satisfies:
    \begin{equation*}
    \frac{1}{n} \sum_{i=1}^n \mathbb{E}_{\xi_i} \left\| \nabla F(x_i, \xi_i) - \nabla f_i(x_i) \right\|_2^2 \leq \sigma^2 + \frac{M}{n} \sum_{i=1}^n \| \nabla f_i(x_i) \|_2^2.
    \end{equation*}
\end{assumption}

\begin{assumption}\label{as:data_hete}
The data heterogeneity across agents is bounded:
    \begin{equation*}
    \frac{1}{n}\sum_{i=1}^n \|\nabla f_i(x) - \nabla f(x)\|^2 \le  \zeta^2 + P\|\nabla f(x)\|^2.
    \end{equation*}
\end{assumption}

\begin{assumption}\label{as:lb}
The global objective $f:\R^d \rightarrow f^*$ is lower bounded by a finite $f^*$.
\end{assumption}

\begin{table*}[t]
\centering
\caption{Key notations for OLDSGD analysis.}
\label{tab:notations}
\footnotesize
\begin{tabular}{lp{0.34\textwidth}lp{0.34\textwidth}}
\toprule
\textbf{Notation} & \textbf{Description} & \textbf{Notation} & \textbf{Description} \\
\midrule
$n$ & Number of agents & $\mc{N}_i$ & Neighboring agents of agent $i$ \\
$f_i,f$ & Local and global objective& $x_i^t,\bar{x}^t$ & Local and average model\\
$W = [w_{ij}]$ & Doubly stochastic mixing matrix & $\lambda_2$ & Second largest eigenvalue of $W$ in magnitude\\
$\tau$ & Number of local steps & $\alpha$ & Step size for gradient updates \\
$g_i^k$ & Stochastic gradient at iteration $k$ & $L$ & Smoothness constant  \\
$\sigma^2, M$ & Noise related constants & $\zeta^2,P$ & Data heterogeneity related constants \\
$C, D$ & Constants in Theorem \ref{th:overall_con} & $p$ & $1-\lambda_2^2$, the spectral constant \\
$C_0,...,C_3$ & Temporary constants in \eqref{eq:sch_des} and \eqref{eq:sch_con} &$c$ & The ratio of comm. and comp. times\\
\bottomrule
\end{tabular}
\end{table*}

\subsection{Convergence of OLDSGD}
\label{sec:conv_OLDSGD}
The key notations used in the main text are summarized in Table \ref{tab:notations}. As mentioned in Section \ref{sec:alg_design}, the average update of OLDSGD follows an SGD-like dynamic, enabling a descent lemma for the global objective $f$ with the following form:
\begin{equation}
\label{eq:sch_des}
    \E_{t+1} f(\bar x^{t+1}) \le f(\bar x^{t}) - \alpha C_0 \|\nabla f(\bar x^t)\|^2 + \alpha C_1 \sum_{i=1}^n \|x_i^t - \bar x^t\|^2 + \alpha^2C_2,
\end{equation}
where the RHS can be controlled as long as the consensus error is controlled. Our key insight is that OLDSGD preserves a similar consensus error bound as in LDSGD. Intuitively, this occurs because (1) OLDSGD still performs periodic consensus averaging, and (2) the distance between stale and fresh models can be properly bounded. As a result, OLDSGD maintains the same essential consensus property as LDSGD. Specifically, we prove that the cumulative expected consensus error in OLDSGD satisfies the following inequality:
\begin{equation}
\label{eq:sch_con}
    C_1\sum_{t=0}^{T-1} \sum_{i=1}^n\E\|x_i^t - \bar x^t\|^2 \le \frac{C_0}{2}\sum_{t=0}^{T-1} \E\|\nabla f(\bar x^t)\|^2 + \alpha^2C_3.
\end{equation}

Combining \eqref{eq:sch_des} and \eqref{eq:sch_con}, we arrive at the main convergence theorem of OLDSGD under non-convexity and a corollary that considers a specfic step size.

\begin{theorem}
\label{th:overall_con}
Given Assumption \ref{as:d_sto}-\ref{as:lb}, the average model generated by equation \eqref{eq:OLDSGD} satisfies
    \begin{align*}
        \frac{\sum_{t=0}^{T-1} \E \|\nabla f(\bar x^t)\|^2}{T} &\le \frac{8(f^0 - f^*)}{\alpha T} + \frac{8\alpha L}{n}(\frac{\sigma^2}{2} + M\zeta^2)+ \frac{1024 L^2}{n}\frac{D\tau}{p}\alpha^2,
    \end{align*}
    where 
    \begin{equation*}
       \alpha \le \min\{\frac{1}{L}, \frac{n}{4LM(P+1)},\frac{n}{LM},\frac{p}{16L\sqrt{3\tau(2\tau+M)}}, \frac{1}{32L} \sqrt{\frac{pn} {2C\tau}}\}, 
    \end{equation*}
     and
    \begin{equation*}
        C=\frac{12(2\tau+M)n(P+1)}{p}, D=\frac{6((2\tau+M)n\zeta^2+ n\sigma^2)}{p}.
    \end{equation*}
\end{theorem}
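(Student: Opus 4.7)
The plan is to follow the schematic sketched by the authors in \eqref{eq:sch_des} and \eqref{eq:sch_con}: first derive a per-step descent inequality for $f(\bar x^t)$ using the SGD-like average update \eqref{eq:ave_OLDSGD}, then bound the cumulative consensus error, and finally combine the two and solve for the step size.

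For the descent step, I would apply $L$-smoothness (Assumption \ref{as:L_smooth}) to $\bar x^{t+1}=\bar x^t - \frac{\alpha}{n}\sum_i g_i^t$ and take conditional expectation. Expanding the inner product $\langle \nabla f(\bar x^t),\bar x^{t+1}-\bar x^t\rangle$ using $\nabla f(\bar x^t)=\frac{1}{n}\sum_i \nabla f_i(\bar x^t)$ and the identity $2\langle a,b\rangle = \|a\|^2+\|b\|^2-\|a-b\|^2$ on the split $\nabla f_i(x_i^t)=\nabla f_i(\bar x^t)+[\nabla f_i(x_i^t)-\nabla f_i(\bar x^t)]$ produces a $-\|\nabla f(\bar x^t)\|^2$ term plus a residual controlled by $L^2\sum_i\|x_i^t-\bar x^t\|^2$. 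For the quadratic term $\|\frac{1}{n}\sum_i g_i^t\|^2$, Assumption \ref{as:var} bounds the noise by $\sigma^2/n + (M/n)\sum_i\|\nabla f_i(x_i^t)\|^2$, and Assumption \ref{as:data_hete} further bounds the deterministic piece by $\zeta^2+P\|\nabla f(\bar x^t)\|^2$ plus another consensus-error term. Choosing $\alpha L \le 1$ and $\alpha\le n/(4LM(P+1))$ ensures the $\|\nabla f(\bar x^t)\|^2$ coefficient stays strictly negative, yielding \eqref{eq:sch_des} with $C_0 = \Theta(1)$, $C_1 = \Theta(L^2/n)$, and $C_2 = \Theta(L\sigma^2/n + LM\zeta^2/n)$.

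The main obstacle is establishing \eqref{eq:sch_con}, the cumulative consensus bound, because OLDSGD mixes with stale models $x_j^{t-\tau}$ rather than $x_j^{t-1}$. My plan is to analyze blocks of length $\tau$ between consecutive mixing steps. Inside a block, the deviation $x_i^t - x_i^{t-(t\bmod\tau)}$ is a partial sum of $\tau$ local stochastic gradients, whose second moment can be bounded using Assumptions \ref{as:var}--\ref{as:data_hete} by $\tau\sum_k [\sigma^2 + \zeta^2 + \|\nabla f(\bar x^k)\|^2 + \text{consensus error}]$. At a mixing iterate $t$ with $t\bmod\tau=0$, I subtract $\bar x^t=\bar x^{t-\tau}-\frac{\alpha}{n}\sum_i\sum_k g_i^k$ from $x_i^t = \sum_j w_{ij} x_j^{t-\tau}-\alpha\sum_{k=t-\tau}^{t-1}g_i^k$. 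The mixing part contracts $\sum_i\|x_j^{t-\tau}-\bar x^{t-\tau}\|^2$ by $\lambda_2^2 = 1-p$ via Assumption \ref{as:d_sto}, while the gradient part contributes $O(\alpha^2\tau\sum_k\sum_i\|g_i^k\|^2)$ after Cauchy--Schwarz. Crucially, because the mixing acts on $x_j^{t-\tau}$, which was itself a post-mixing iterate, no additional staleness-induced cross terms appear; this is precisely the benefit of the CTA design.

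Unrolling this block recursion and summing over $t=0,\dots,T-1$ gives a telescoping bound in which $\sum_t \sum_i \E\|x_i^t-\bar x^t\|^2$ is controlled by $(\tau/p)$ times the cumulative $\|\nabla f(\bar x^t)\|^2$ plus an $O(\alpha^2 \tau D/p)$ noise/heterogeneity residual, with $C,D$ as in the theorem. The step-size constraints $\alpha \le p/(16L\sqrt{3\tau(2\tau+M)})$ and $\alpha \le \frac{1}{32L}\sqrt{pn/(2C\tau)}$ are exactly what is needed to absorb the $\|\nabla f(\bar x^t)\|^2$ contribution into $\frac{C_0}{2}\sum_t\E\|\nabla f(\bar x^t)\|^2$ on the right-hand side of \eqref{eq:sch_con}. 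Summing \eqref{eq:sch_des} over $t$, inserting \eqref{eq:sch_con}, telescoping $f(\bar x^0)-f^* \ge f(\bar x^0) - \E f(\bar x^T)$, and dividing by $\alpha T/8$ produces the three-term bound with coefficients $8(f^0-f^*)/(\alpha T)$, $8\alpha L(\sigma^2/2 + M\zeta^2)/n$, and $1024 L^2 D\tau/(pn) \cdot \alpha^2$ as stated.
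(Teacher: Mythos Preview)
Your proposal is correct and matches the paper's proof: the descent inequality you sketch is exactly Lemma~\ref{le:descent}, and your two-part consensus analysis (contraction at mixing iterates plus drift inside a block) corresponds to Lemmas~\ref{le:con_1} and~\ref{le:con_2}. The one organizational difference is that the paper states the contracting recursion for \emph{every} $t$ back to $t'=(\lfloor t/\tau\rfloor-1)\tau$ (so the gradient sum spans up to $2\tau$ steps, which is where the $2\tau+M$ factors in $C$ and $D$ originate), and then, rather than unrolling and summing by hand, invokes Lemma~14 of \cite{koloskova2020unified} as a black-box summation tool to convert the pair of recursions into the cumulative bound of Lemma~\ref{le:overall_con}.
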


\begin{corollary}
\label{coro:1}
Under the conditions in Theorem \ref{th:overall_con}, let $\alpha = \sqrt\frac{n}{T}$, we have 
\begin{align*}
    \frac{\sum_{t=0}^{T-1} \E \|\nabla f(\bar x^t)\|^2}{T} &\le \frac{8(f^0 - f^*)}{\sqrt{nT}} + \frac{8L(\frac{\sigma^2}{2} + M\zeta^2)}{\sqrt{nT}}+ \frac{6144n L^2\tau((2\tau+M))\zeta^2 + \sigma^2)}{p^2T},
\end{align*}
given that
    \begin{align*}
        T \ge \max\{nL^2, \frac{16L^2M^2(P+1)^2}{n}, \frac{L^2M^2}{n}, \frac{768nL^2\tau(2\tau+M)}{p^2}, \frac{24576nL^2\tau(2\tau+M)(P+1)}{p^2}\}.
    \end{align*}
\end{corollary}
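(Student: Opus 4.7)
The plan is to treat Corollary \ref{coro:1} as a direct specialization of Theorem \ref{th:overall_con} with $\alpha = \sqrt{n/T}$, so the only real work is (i) verifying that this choice of step size satisfies every one of the five upper bounds on $\alpha$ listed in the theorem, and (ii) simplifying the three terms of the theorem bound under this substitution.

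First I would go through the five step-size conditions one by one and derive the corresponding lower bound on $T$. For each condition $\sqrt{n/T} \le B$ I simply square both sides to get $T \ge n/B^2$. Concretely: $\alpha \le 1/L$ yields $T \ge nL^2$; $\alpha \le n/(4LM(P+1))$ yields $T \ge 16L^2M^2(P+1)^2/n$; $\alpha \le n/(LM)$ yields $T \ge L^2M^2/n$; and $\alpha \le p/(16L\sqrt{3\tau(2\tau+M)})$ yields $T \ge 768 n L^2 \tau(2\tau+M)/p^2$. The last condition $\alpha \le (1/(32L))\sqrt{pn/(2C\tau)}$ requires substituting $C = 12(2\tau+M)n(P+1)/p$, which collapses to $\alpha \le p/(32L\sqrt{24\tau(2\tau+M)(P+1)})$, so squaring yields $T \ge 24576 n L^2 \tau(2\tau+M)(P+1)/p^2$. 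Taking the maximum of these five lower bounds reproduces exactly the condition on $T$ stated in the corollary.

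Next I would substitute $\alpha = \sqrt{n/T}$ into each of the three terms on the right-hand side of Theorem \ref{th:overall_con}. The optimization term becomes $8(f^0 - f^*)/(\alpha T) = 8(f^0 - f^*)/\sqrt{nT}$. The stochastic/heterogeneity term becomes $(8\alpha L / n)(\sigma^2/2 + M\zeta^2) = 8L(\sigma^2/2 + M\zeta^2)/\sqrt{nT}$. Finally, for the higher-order term I plug in $D = 6((2\tau+M)n\zeta^2 + n\sigma^2)/p$ and $\alpha^2 = n/T$, giving
\begin{equation*}
\frac{1024 L^2}{n}\cdot\frac{D\tau}{p}\cdot\frac{n}{T} = \frac{1024 L^2 \tau}{p}\cdot\frac{6((2\tau+M)n\zeta^2 + n\sigma^2)}{p T} = \frac{6144\, n L^2 \tau\bigl((2\tau+M)\zeta^2 + \sigma^2\bigr)}{p^2 T}.
\end{equation*}
Assembling these three terms gives the stated bound.

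There is no real obstacle here; the only place to be careful is the fifth step-size condition, where one must correctly expand $C$ before squaring, since the constants $24$ and $1024$ combine into the $24576$ appearing in the final $T$ requirement. The corollary is thus obtained as a direct corollary of Theorem \ref{th:overall_con} with the announced choice of step size, and no additional probabilistic or algorithmic argument is needed beyond the theorem itself.
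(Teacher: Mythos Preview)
Your proposal is correct and follows essentially the same approach as the paper: substitute $\alpha=\sqrt{n/T}$ into the bound of Theorem \ref{th:overall_con} and into each of the five step-size constraints, then square to extract the corresponding lower bounds on $T$. Your expansion of the fifth constraint via $C$ and of the third term via $D$ matches the paper's computation exactly; you simply spell out the intermediate arithmetic that the paper leaves implicit.
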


\begin{remark}
    As shown in Corollary \ref{coro:1}, OLDSGD achieves a convergence rate of $\mc{O}(\frac{1}{\sqrt{nT}} + \frac{1}{T})$, comparable to previously established result of Local DSGD under non-convexity \cite{koloskova2020unified,li2019communication}. Namely, the overlapping update \eqref{eq:OLDSGD} does not degenerate any theoretical guarantees, compared to Local DSGD.
\end{remark}

\section{Experiments}
\label{sec:exp}

We evaluate Overlapping Local Decentralized Stochastic Gradient Descent (OLDSGD) across vision and language tasks, comparing it against state-of-the-art decentralized methods: Local DSGD (LDSGD) \cite{koloskova2020unified}, KGT \cite{liu2024decentralized}, LUGT \cite{nguyen2023performance}, LED\footnote{Results of LED is not covered in the main text since it diverges constantly; please refer to the Appendix for details.} \cite{alghunaim2024local}, and Local SGD (LSGD) \cite{stich2018local} with ring-allreduce communication. Experiments are conducted on a server with 8 NVIDIA RTX3090 GPUs (24GB memory each) using PyTorch 2.0.1. Implementations adhere to original algorithm descriptions without additional accelerations. We test homogeneous and heterogeneous data distributions and model communication delays to assess runtime convergence and scalability. 

\subsection{Experimental Setup, Tasks, and Hyperparameters}
\label{sec:exp_setup}
\textbf{Setup.} Experiments use an undirected ring topology unless stated otherwise. To compare convergence w.r.t. time, we model the delays incurred by communication and computation. For computation, we assume each agent take 1 unit of time to compute a local (stochastic) gradient per step, ignoring other operations (e.g., model updates via addition/subtraction), as they are negligible in comparison.  For communication, each agent incurs a delay of \( c \) units of time per consensus step to transmit its model parameters to neighbors. Table~\ref{tab:runtime} presents the per-iteration runtime of each algorithm as a function of \( \tau \) (number of local steps) and \( c \), with the last column showing the normalized runtime when \( \tau = c \), relative to OLDSGD. OLDSGD’s runtime benefits from communication-computation overlap, improving by over 50\% compared to existing methods. 

\begin{table}[h]
\centering
\caption{Per-iteration runtime for different algorithms, with normalized values when \( \tau = c \), relative to OLDSGD. LSGD adopts ring-allreduce for communication.}
\label{tab:runtime}
\begin{tabular}{ccc}
\toprule
Algorithm & Per-Iteration Runtime & Normalized Value (\( \tau = c \)) \\
\midrule
OLDSGD & \( \max\{\tau, c\} \) & 1 \\
LDSGD & \( \tau + c \) & 2 \\
KGT & \( \max\{\tau, c\} + c \) & 2 \\
LED & \( \tau + c \) & 2 \\
LUGT & \( \tau + 2c \) & 3 \\
LSGD & \( \tau + \frac{2(n-1)}{n}c \) & \( 1 + \frac{2(n-1)}{n} \) \\
\bottomrule
\end{tabular}
\end{table}

\textbf{Tasks and Datasets.} We evaluate three tasks:
\begin{enumerate}
    \item \textbf{VGG11} \cite{simonyan2014very} Training on CIFAR-10 (50,000 training, 10,000 test images), with 9 agents.
    \item \textbf{ResNet18} \cite{he2016deep} Training on CIFAR-10, with 9 agents.
    \item \textbf{GPT2-Small} \cite{radford2019language} Finetuning on WikiText-2 \cite{merity2016pointer} (2M tokens), with 8 agents (homogeneous only).
\end{enumerate}

For VGG11 and ResNet18 training, we evaluated both homogeneous and heterogeneous data distributions. In the homogeneous setting, we evenly distributed the dataset across agents to ensure balanced label representation. For the heterogeneous setting, we partitioned the dataset by class labels to introduce data skew. Specifically, 70\% of each agent's data is assigned based on specific labels, with the remaining 30\% uniformly sampled across all labels. For GPT2 finetuning, we only evaluated the homogeneous data distribution due to computation constraints.

\textbf{Hyperparameters.} We configure hyperparameters to ensure a fair comparison. We select a fixed learning rate of \( \alpha = 0.01 \) for all tasks, determined to be suitable based on preliminary experiments, and apply no learning rate scheduling to maintain consistency. For GPT2 finetuning, we employ gradient clipping with a maximum gradient norm of 1 to stabilize training of the large language model. Local batch sizes were set as follows: 32 for logistic regression, 8 for VGG11 and ResNet18, and 2 for GPT2 finetuning.

\begin{figure*}[htbp]
\centering
\hspace{-0.2cm}
\begin{subfigure}[t]{0.31\textwidth}
\centering
\includegraphics[width=\textwidth]{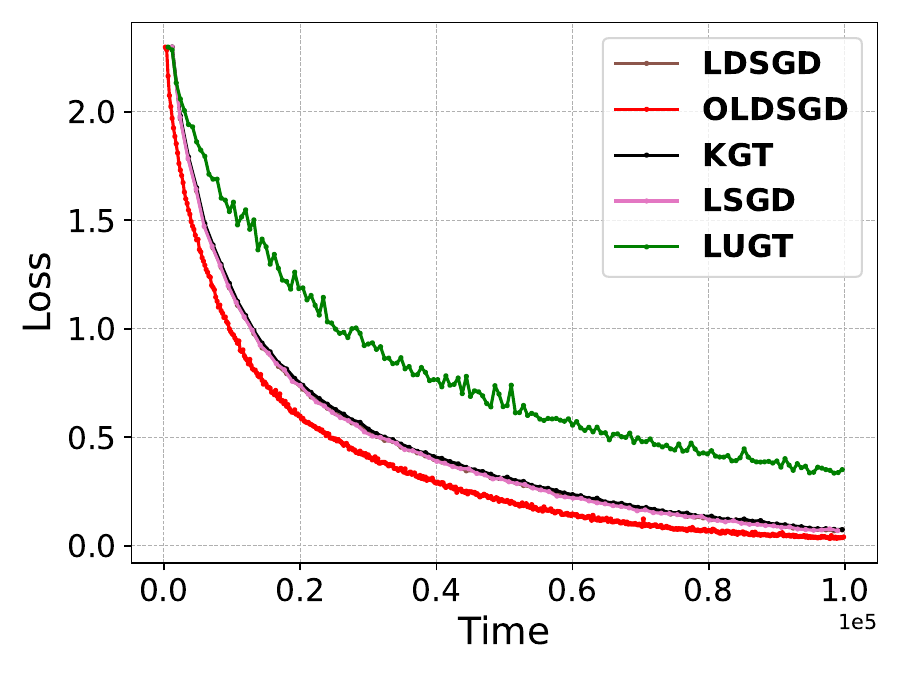}
\caption{Homo.-VGG11}
\label{fig:LOG_stra_loss}
\end{subfigure}
\begin{subfigure}[t]{0.31\textwidth}
\centering
\includegraphics[width=\textwidth]{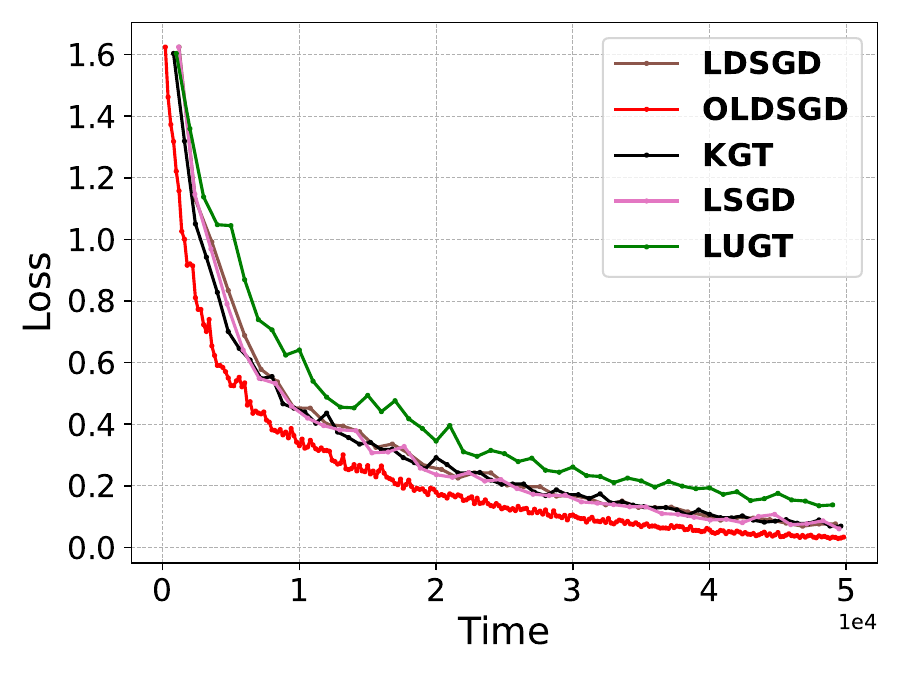}
\caption{Homo.-ResNet18}
\label{fig:LOG_stra_loss}
\end{subfigure}
\begin{subfigure}[t]{0.31\textwidth}
\centering
\includegraphics[width=\textwidth]{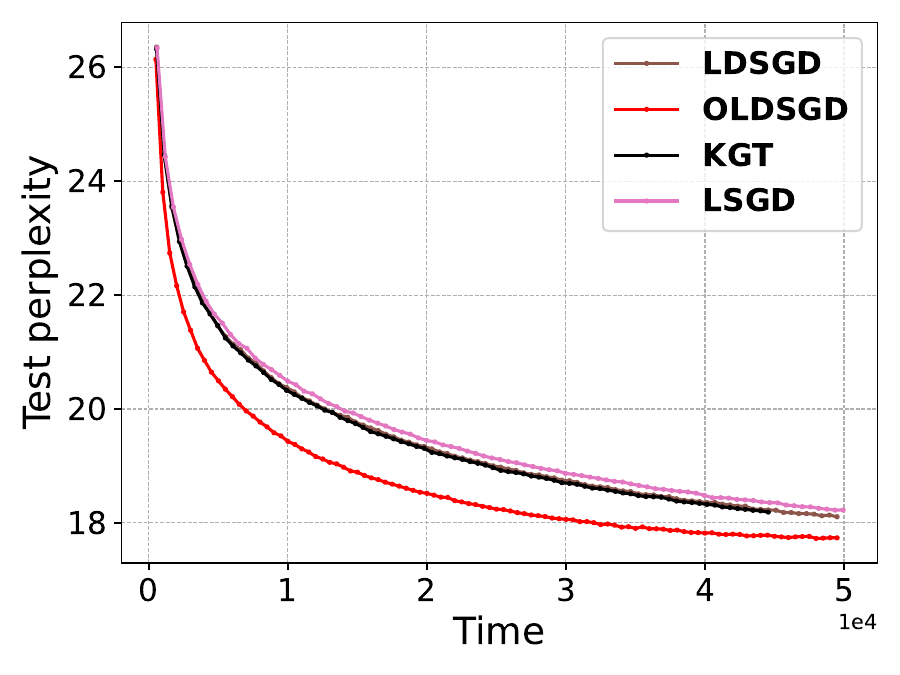}
\caption{Homo.-GPT2}
\label{fig:LOG_stra_loss}
\end{subfigure}
\begin{subfigure}[t]{0.31\textwidth}
\centering
\includegraphics[width=\textwidth]{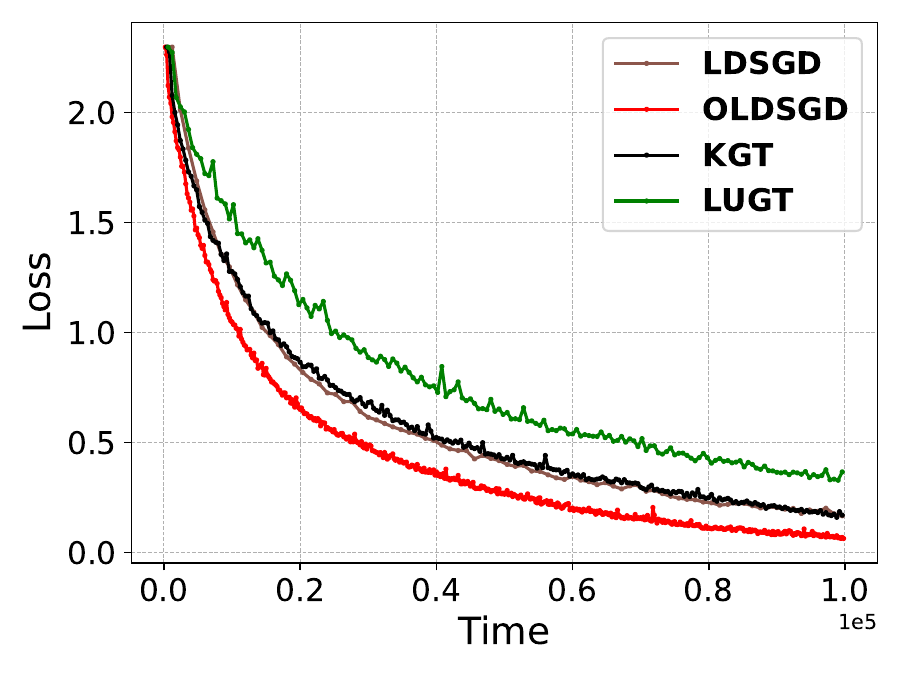} 
\caption{Hete.-VGG11}
\label{fig:LOG_stra_loss}
\end{subfigure}
\begin{subfigure}[t]{0.31\textwidth}
\centering
\includegraphics[width=\textwidth]{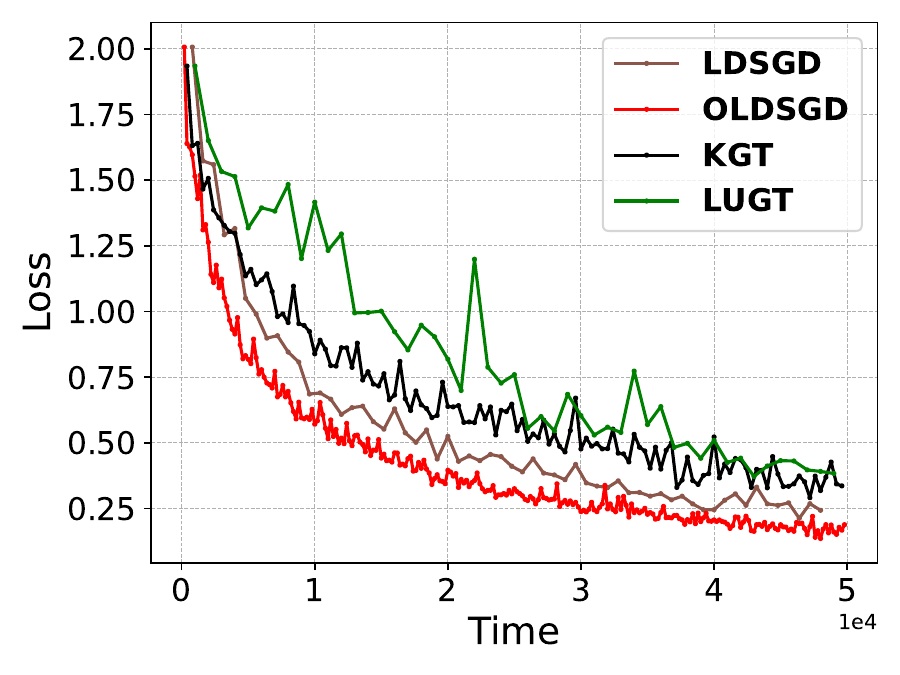}
\caption{Hete.-ResNet18}
\label{fig:LOG_stra_loss}
\end{subfigure}
\caption{Convergence w.r.t. time under different tasks (c=1). The first row presents all homogeneous cases, while the second row presents all heterogeneous cases.}
\label{fig:loss_time_c=1}
\end{figure*}

\subsection{Comparing convergence w.r.t. time}
\label{sec:alg_comp}
We analyze convergence performance, quantified as loss reduction over (simulated) wall-clock time, under communication delays of \( c = 1 \) and \( c = 5 \), where \( c \) denotes the communication time in units relative to a single stochastic gradient computation (set to 1 unit). The \( c = 1 \) scenario assumes communication speed matches computation, while \( c = 5 \) models communication being five times slower, reflecting common real-world decentralized training environments with network latency. To ensure a fair comparison, we evaluated a broad range of local steps \( \tau \in \{1, 3, 5, 10, 15, 20, 30, 40\} \) for each algorithm, selecting the optimal \( \tau \) that maximizes convergence speed for each method, thereby approximating their best achievable performance under the given conditions.

Fig. \ref{fig:loss_time_c=1} and Fig. \ref{fig:loss_time_c=5} illustrate the convergence performance across various algorithms and tasks. The top row displays results for homogeneous data distributions, while the bottom row corresponds to heterogeneous distributions. For GPT2 finetuning, we exclude LUGT and LED due to their tendency to diverge in simpler tasks and limit the evaluation to the homogeneous setting to manage computational constraints. For GPT2, the vertical axis represents the perplexity of the average model on the test dataset, as computing training set metrics is computationally intensive. For other tasks, the vertical axis reflects the training loss of the average model, denoted as $f(\bar x^k)$ in our analysis.

In all evaluated scenarios, OLDSGD consistently outperforms other algorithms in convergence speed. Notably, despite the theoretical advantages of gradient tracking and exact diffusion methods, they generally exhibit slower convergence (or divergence for LED; see Appendix \ref{appen:OLED_OLUGT}) than vanilla gradient descent-based methods, such as OLDSGD and LDSGD, under the same learning rate. Specifically, LUGT, which naively extends Gradient Tracking with local steps, converges more slowly, particularly for \(\tau > 1\), as its loss curves lag significantly behind OLDSGD in Fig. \ref{fig:loss_time_c=1} and \ref{fig:loss_time_c=5} (see Appendix for further analysis). KGT is more robust due to its update mechanism, achieving convergence rates comparable to LDSGD in most tasks (e.g., VGG11 and GPT2 finetuning) at \(c = 1\), except for ResNet18 under heterogeneous data, where its loss reduction is noticeably slower. At \(c = 5\), KGT’s performance declines further under heterogeneity for VGG11 and ResNet18, as evidenced by its flatter loss curves in Fig. \ref{fig:loss_time_c=5}.

\begin{figure*}[t]
\centering
\hspace{-0.2cm}
\begin{subfigure}[t]{0.31\textwidth}
\centering
\includegraphics[width=\textwidth]{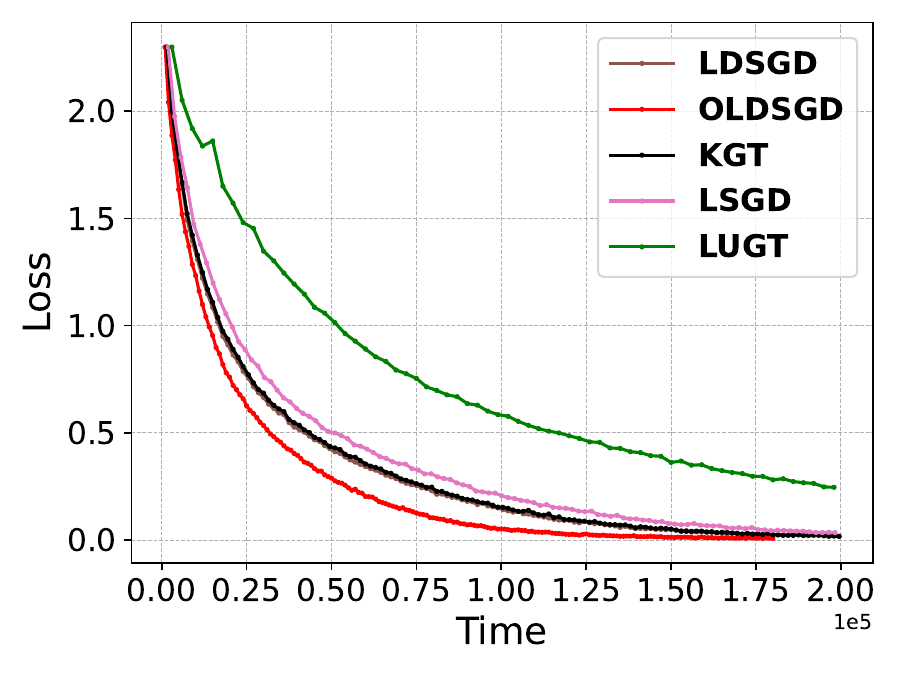}
\caption{Homo.-VGG11}
\label{fig:LOG_stra_loss}
\end{subfigure}
\begin{subfigure}[t]{0.31\textwidth}
\centering
\includegraphics[width=\textwidth]{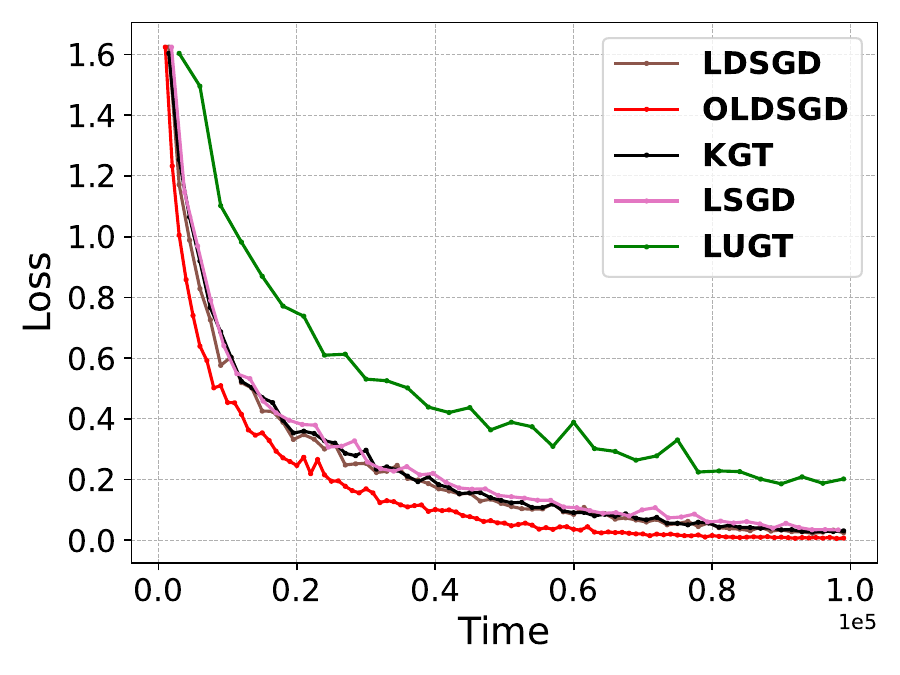}
\caption{Homo.-ResNet18}
\label{fig:LOG_stra_loss}
\end{subfigure}
\begin{subfigure}[t]{0.31\textwidth}
\centering
\includegraphics[width=\textwidth]{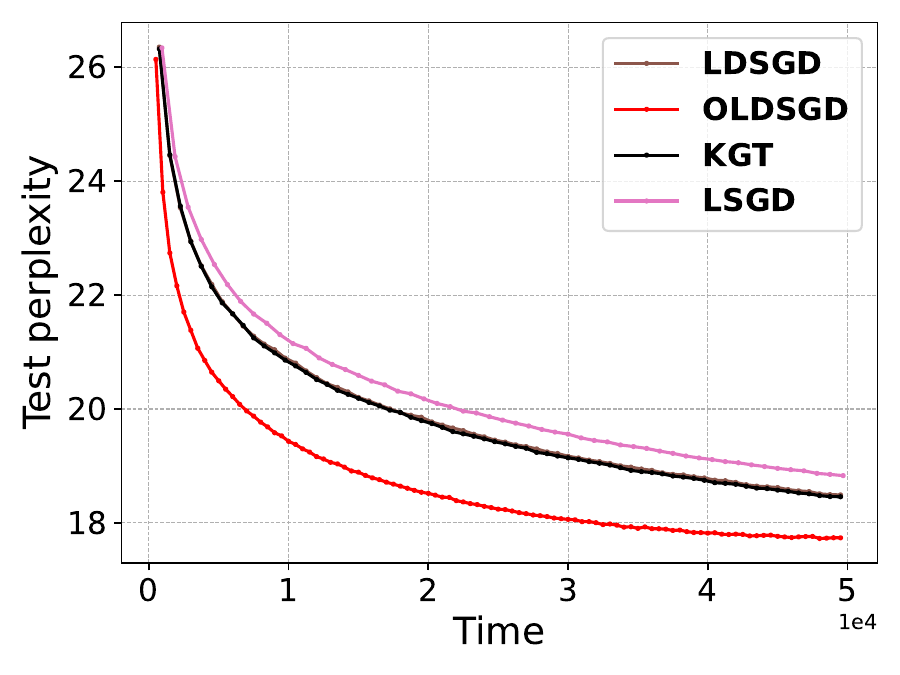}
\caption{Homo.-GPT2}
\label{fig:LOG_stra_loss}
\end{subfigure}
\begin{subfigure}[t]{0.31\textwidth}
\centering
\includegraphics[width=\textwidth]{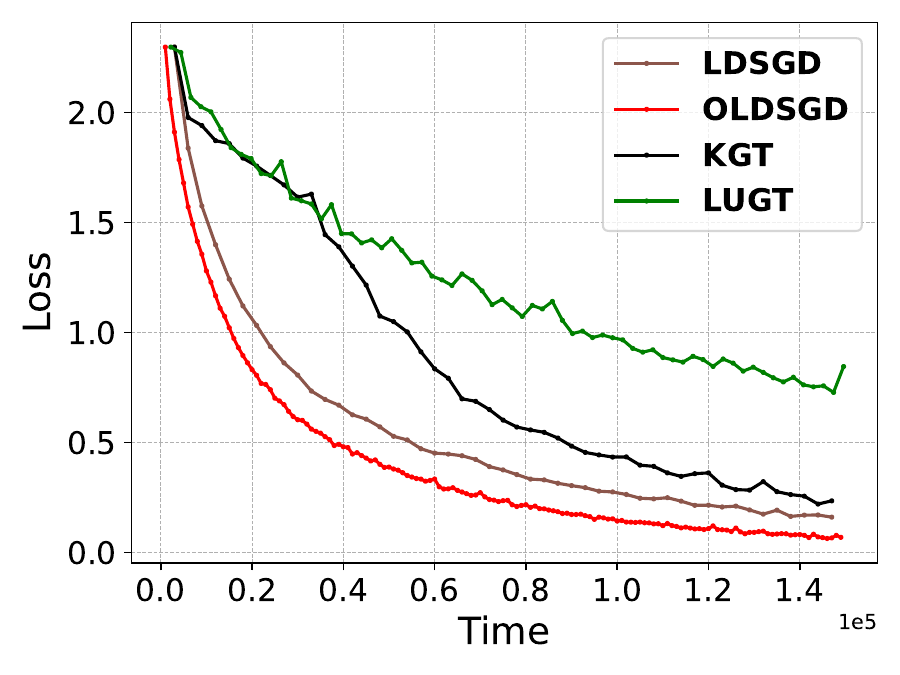} 
\caption{Hete.-VGG11}
\label{fig:LOG_stra_loss}
\end{subfigure}
\begin{subfigure}[t]{0.31\textwidth}
\centering
\includegraphics[width=\textwidth]{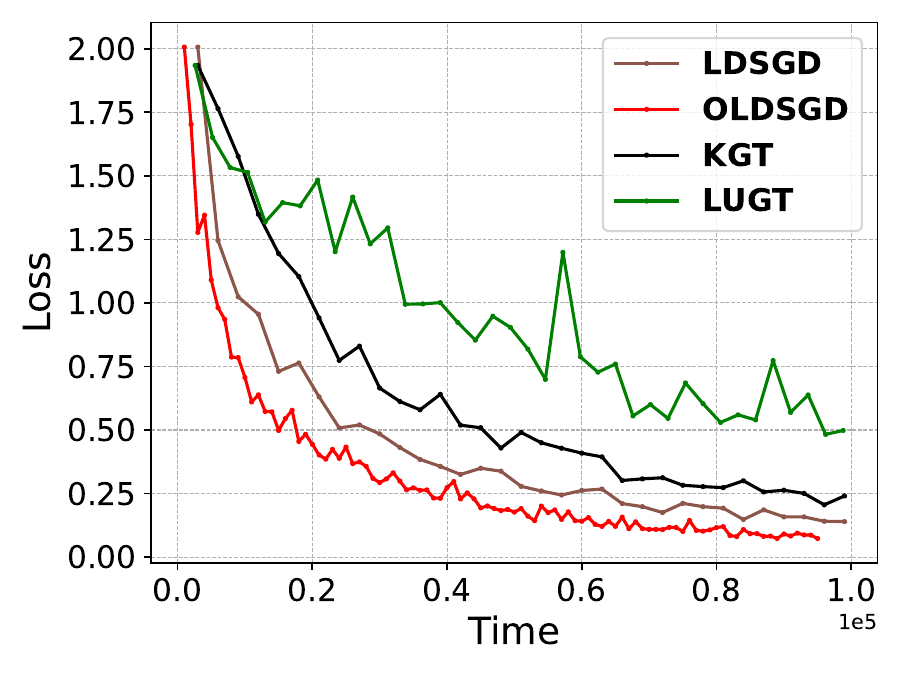} 
\caption{Hete.-ResNet18}
\label{fig:LOG_stra_loss}
\end{subfigure}
\caption{Convergence w.r.t. time under different tasks (c=5). The first row presents all homogeneous cases, the second presents all heterogeneous cases.}
\label{fig:loss_time_c=5}
\end{figure*}

Table \ref{tab:speedup} quantifies OLDSGD’s speedup relative to other algorithms, focusing on the case of homogeneous data distribution (results for heterogeneous settings are deferred to the appendix due to space constraints). The speedup is calculated as the relative time (normalized to OLDSGD’s runtime) required to reach a predefined test accuracy for CIFAR-10 tasks, or perplexity for GPT2 finetuning. The geometric mean of OLDSGD’s speedup over LDSGD exceeds 1.64, reflecting a significant performance gain with minimal algorithmic modifications. Notably, OLDSGD’s efficiency is more pronounced at \( c = 5 \), where communication is five times slower than computation, as evidenced by higher speedup ratios (e.g., 2.65\(\times\) for GPT2 at \( c = 5 \) vs. 1.95\(\times\) at \( c = 1 \)). This trend underscores OLDSGD’s effectiveness in communication-constrained environments, which are prevalent in real-world decentralized systems, such as edge computing or federated learning. Strikingly, OLDSGD achieves significant speedup (2.08\(\times\) and 3.32\(\times\)) over LSGD in GPT2 finetuning, with its perplexity curves in Figure \ref{fig:loss_time_c=5} converging sharply compared to the flatter trajectories of baselines. This remarkable performance highlights OLDSGD’s potential for scaling to modern transformer architectures, where communication bottlenecks are critical.


\begin{table}[htbp]
\centering
\caption{OLDSGD's Speedup Compared to Existing Methods (Higher Speedup is Better)}
\label{tab:speedup}
\begin{tabular}{ccccccccc}
\toprule
 & \multicolumn{2}{c}{\textbf{VGG11}} & \multicolumn{2}{c}{\textbf{ResNet18}} & \multicolumn{2}{c}{\textbf{GPT2}} & \\
\cmidrule(lr){2-3} \cmidrule(lr){4-5} \cmidrule(lr){6-7} 
\textbf{Algorithm} & $c=1$ & $c=5$ & $c=1$ & $c=5$ & $c=1$ & $c=5$ &  {\textbf{GeoMean}} \\
\midrule
LDSGD & 1.23$\times$ & 1.26$\times$ & 1.50$\times$ & 1.62$\times$ & 1.95$\times$ & 2.65$\times$ & 1.64$\times$  \\
KGT & 1.26$\times$ & 1.30$\times$ & 1.42$\times$ & 1.73$\times$ & 1.86$\times$  & 2.54$\times$ & 1.63$\times$ \\
LSGD & 1.34$\times$ & 1.64$\times$ & 1.84$\times$ &2.18$\times$ & 2.08$\times$ & 3.32$\times$ & 1.98$\times$  \\
LUGT & 2.68$\times$ & 3.53$\times$ & 1.98$\times$ & 4.62$\times$ & - & - & 3.05$\times$  \\
\bottomrule
\end{tabular}
\end{table}

\subsection{Scalability}
Previous experiments demonstrate OLDSGD’s robust performance across diverse tasks, but computational constraints restricts the number of agents tested. To evaluate its scalability, we measure speedup relative to the number of agents for VGG11 and ResNet18 training on CIFAR-10, with local steps fixed at \(\tau = 5\). We test agent counts from 2 to 32 in a ring topology and observed significant speedup, as shown in Figure \ref{fig:scalability}. OLDSGD achieves near-linear speedup up to 16 agents, with VGG11 and ResNet18 attaining approximately 14\(\times\) and 13\(\times\) faster convergence, respectively, compared to a single agent. Speedup growth slows slightly beyond 16 agents. This is primarily due to worse graph connectivity in the ring topology, which limits efficient information exchange as the network scales. These results highlight OLDSGD’s scalability and potential for large-scale decentralized training.

\begin{figure*}[htbp]
\centering
\begin{subfigure}[t]{0.45\textwidth}
\centering
\includegraphics[width=\textwidth]{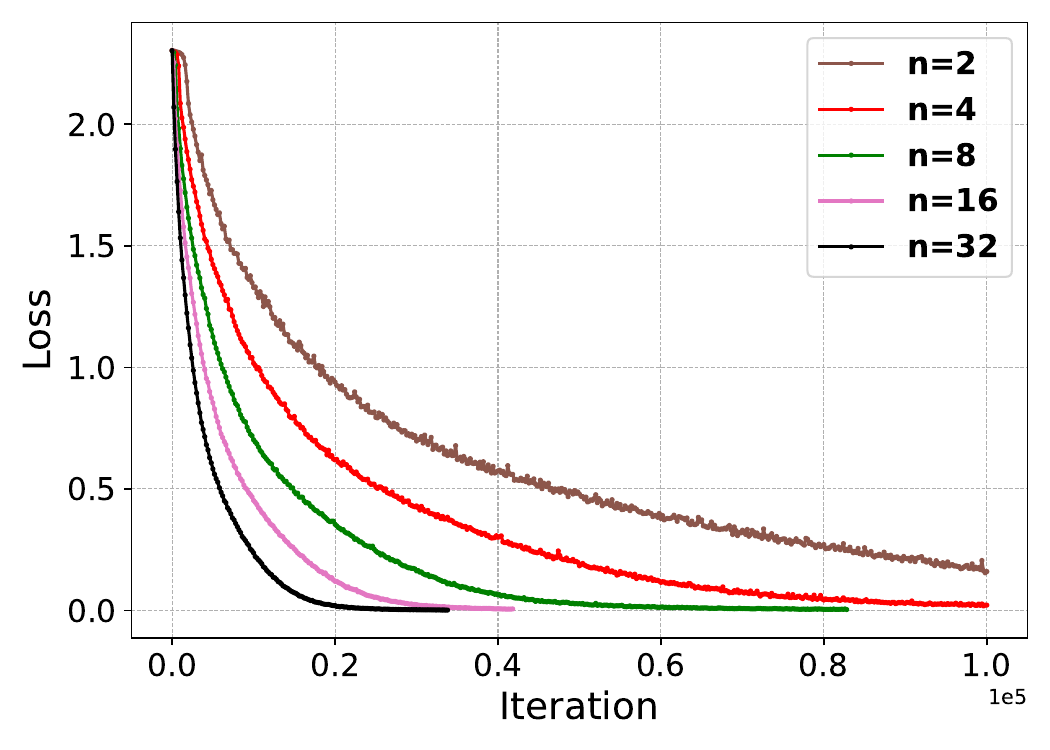}
\caption{VGG11} %
\label{fig:LOG_no_stra_loss}
\end{subfigure}
\hspace{-0.2cm}
\begin{subfigure}[t]{0.45\textwidth}
\centering
\includegraphics[width=\textwidth]{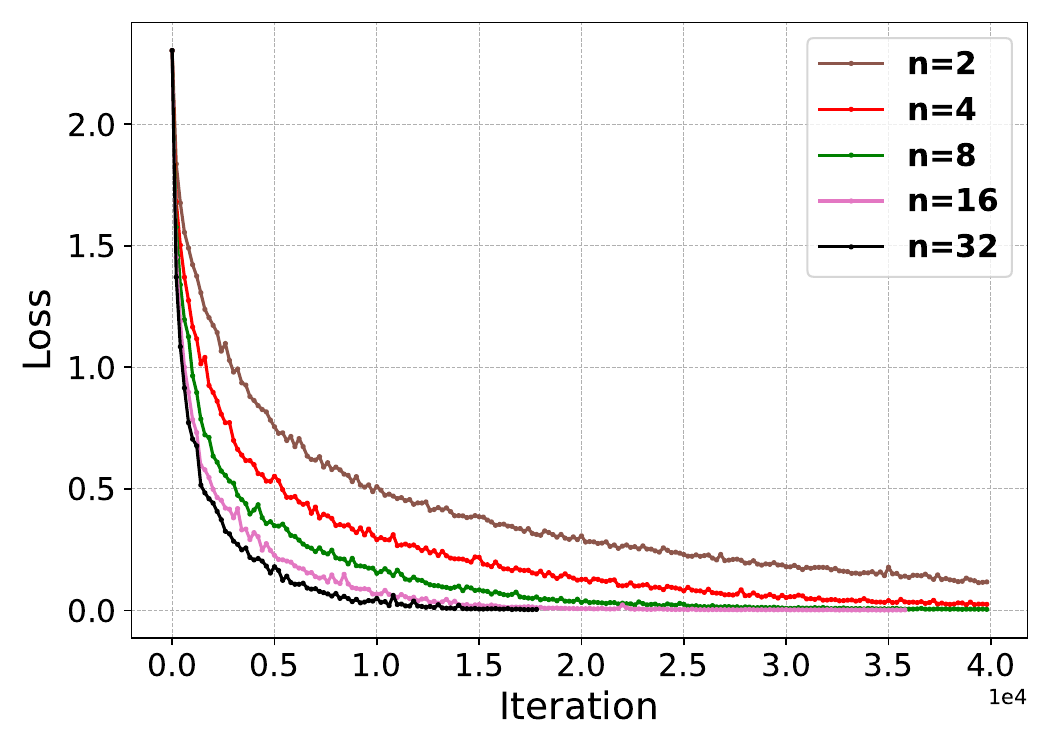}
\caption{ResNet18}
\label{fig:LOG_stra_loss}
\end{subfigure}
\caption{Scalability of OLDSGD on VGG11 (a) and ResNet18 (b) (CIFAR-10, ring topology, $\tau=5$). Near-linear speedup is achieved up to 16 agents (14× for VGG11, 13× for ResNet18), with diminishing returns beyond due to ring topology constraints. Loss curves demonstrate accelerated convergence with increasing parallelism.}

\label{fig:scalability}
\end{figure*}

\section{Conclusion}
We propose Overlapping Local Decentralized SGD (OLDSGD), a novel decentralized training algorithm that overlaps communication and computation to minimize synchronization overhead. Compared to Local DSGD, OLDSGD reduces per-iteration runtime to at most 50\% while maintaining the same theoretical guarantees under non-convexity. Experimentally, OLDSGD achieves a geometric mean speedup of 1.64\(\times\) over LDSGD and up to 3.32\(\times\) for GPT2 finetuning, excelling in communication-constrained settings (\(c = 5\)) prevalent in federated learning and edge computing. With minimal algorithmic modification, OLDSGD delivers significant runtime improvements without sacrificing theoretical robustness, making it an attractive drop-in solution for decentralized training.


\bibliographystyle{siam}
\bibliography{ref} 


\appendix

\section{Iteration Runtime Analysis}
Table \ref{tab:runtime} illustrates the per-iteration runtime of different algorithms, a detailed analysis on each algorithm's iteration-wise runtime is provided below. The setting considers $n$ agent, where each agent takes 1 unit of time to compute (stochastic) gradient and $c$ units of time to communicate data of the model size. Each communication round includes $\tau$ local steps. The original notations in the references are adopted when discussing.
\begin{itemize}
    \item LSGD: ring Allreduce divides model into $n$ chunks, transmitting each chunk $2(n-1)$ times, resulting in a $\frac{2(n-1)}{n}c$ communication time. The overall runtime $\tau + \frac{2(n-1)}{n}c$ follows after counting local gradient computations.
    \item OLDSGD: as illustrated in Fig. \ref{fig:schematic_OLDSGD}. 
    \item LDSGD: as illustrated in Fig. \ref{fig:schematic_OLDSGD}. 
    \item KGT: both the model $\mbf{x}_i^{(t)}$ and the accumulated gradient $\mbf{z}_i^{(t)}$. Though not mentioned in the reference, the communication of the model $\mbf{x}_i^{(t)}$ can actually be overlapped by gradient computations, as each update uses model from last communication round. This results in the $\max \{\tau,c\}$ term. However, the pass of the accumulated gradient $\mbf{z}_i^{(t)}$ must wait until an agent finishes local steps, adding a $c$ to the overall runtime.
    \item LUGT: this method communicates the model and the tracking variable, resulting in a $2c$ communication time. And no overlapping can be applied, resulting in the $\tau + 2c$ runtime.
    \item LED: similar to Exact Diffusion\cite{yuan2018exact}, LED's communication budget is identical to a model size, and no overlapping can be applied, leading to a $\tau + c$ runtime.
\end{itemize}

\section{Overlap Feasibility in LED/LUGT}
\label{appen:OLED_OLUGT}
As discussed in Section \ref{sec:alg_design}, the overlapping principle can be extended to LED and LUGT. Algorithms \ref{alg:OLGT} and \ref{alg:OLED} demonstrate (using their original notation, which may differ from our conventions) fully overlapped communication-computation implementations that preserve the same average update properties as their non-overlapping counterparts. However, this overlapping approach leads to significant performance degradation, as evidenced in Fig. \ref{fig:OLED_OLGT}, where the loss curves of ResNet18 training under a low data heterogeneity level is presented. The task is simpler due to less data heterogeneity, as evidenced by faster convergence of OLDSGD. While OLGT with $\tau = 1$ maintains comparable performance to standard LUGT, increasing to $\tau = 5$ results in substantial slowdowns that outweigh the per-iteration runtime benefits. Fig. \ref{fig:OLED_OLGT} also highlights LED's inherent divergence problems, which explains our exclusion of LED from the comparative analysis in Section \ref{sec:alg_comp}. OLED diverges even faster than vanilla LED in all tested configurations. These results demonstrate that naive overlapping implementations are infeasible for both LED and LUGT. The development of stable overlapping variants remains an open research challenge requiring more sophisticated approaches than direct method combination.

\begin{algorithm}
\caption{Overlapping LUGT (OLGT)}
\begin{algorithmic}[1]
\label{alg:OLGT}
\STATE \textbf{Input:} $x_i^0 = 0 \in \mathbb{R}^m$, $y_i^0 = \alpha \nabla f_i(x_i^0)$, $\alpha > 0$, $\eta > 0$, $T_0 \in \mathbb{Z}_{\geq 0}$, $K \in \mathbb{Z}_+$
\STATE \textbf{Define:} $\tau = \{0, T_0, 2T_0, 3T_0, \ldots\}$
\FOR{$k = 0$ \textbf{to} $K-1$}
    \IF{$k \in \tau$}
        \STATE $x_i^{k+1} = \sum_{j \in \mathcal{N}_i} w_{ij}  x_j^{k+1-T_0}  - \eta \sum_{t=k+1-T_0}^k y_i^t $
        \STATE $y_i^{k+1} = \sum_{j \in \mathcal{N}_i} w_{ij}  y_j^{k+1-T_0}  + \alpha \sum_{t=k+1-T_0}^k (\nabla f_i(x_i^{t+1}) - \nabla f_i(x_i^{t}))$
    \ELSE
        \STATE $x_i^{k+1} = x_i^k - \eta y_i^k$
        \STATE $y_i^{k+1} = y_i^k + \alpha \nabla f_i(x_i^{k+1}) - \alpha \nabla f_i(x_i^k)$
    \ENDIF
\ENDFOR
\end{algorithmic}
\end{algorithm}

\begin{algorithm}
\caption{Overlapping Local Exact-Diffusion (OLED)}
\begin{algorithmic}[1]
\label{alg:OLED}
\STATE \textbf{Input:} $x_i^0$, $\alpha > 0$, $\beta > 0$, and $\tau$.
\STATE \textbf{Initialize:} $y_i^0 = x_i^0 - \sum_{j\in\mc{N}_i }w_{ij}x_j^0$ (or $y_i^0 = 0$)
\FOR{$r = 0, 1, 2, \ldots$} 
    \STATE Set $\phi^r_{i,0} = x_i^r$
    \FOR{$t = 0$ \textbf{to} $\tau-1$} 
        \STATE $\phi^r_{i,t+1} = \xi^r_{i,t} -\alpha \nabla F_i(\phi_{i,t}^r;\xi_{i,t}^r) -  \beta y_i^r$ 
    \ENDFOR
    \STATE $x_i^{r+1} = \sum_{j \in \mathcal{N}_i} w_{ij} x_j^{r} + (\phi_{i,\tau}^r - x_i^r)$ 
    \STATE $y_i^{r+1} = y_i^r + \nabla f_i(x_i^{r+1}) - \nabla f_i(x_i^r)$ 
\ENDFOR
\end{algorithmic}
\end{algorithm}

\begin{figure}
    \centering
    \includegraphics[width=0.7\linewidth]{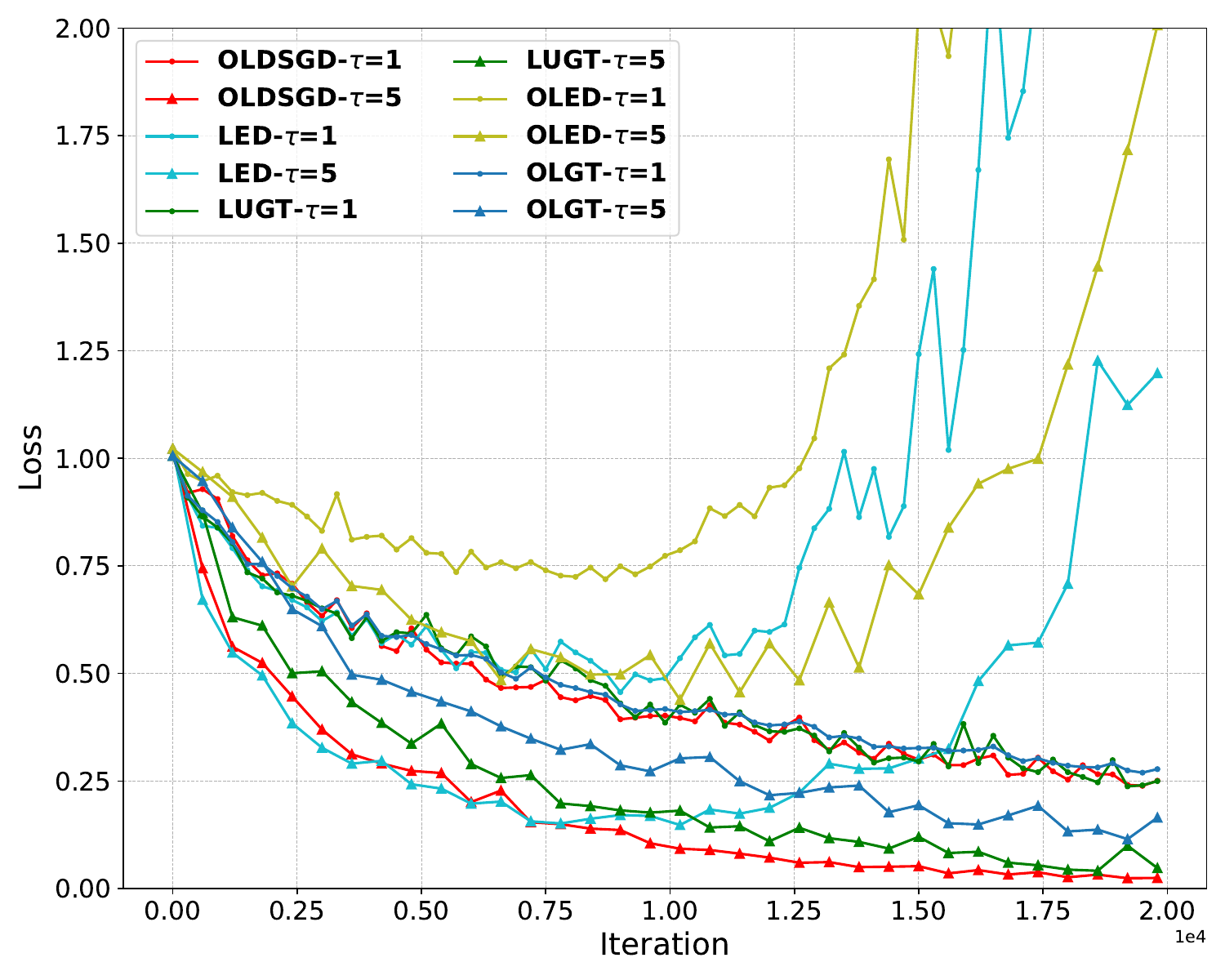}
    \caption{Loss curves of ResNet18 training w.r.t. iteration under a low data heterogeneity level. LED/OLED diverges in all cases and OLGT performs worse than LUGT under local steps.}
    \label{fig:OLED_OLGT}
\end{figure}

\section{Theoretical Convergence of OLDSGD}
\label{sec:proof}
In this section, we theoretically proof the convergence of Overlapping Local DSGD under non-convexity. 

\subsection{Notation}
We present notations used in the following analysis. Readers may also refer to Table \ref{tab:notations} as a complement.

First, we define that for any iteration $t$,
\begin{equation*}
    t' \triangleq (\lfloor\frac{t}{\tau}\rfloor - 1)\tau \quad \text{and}\quad t'' \triangleq \lfloor\frac{t}{\tau}\rfloor\tau,
\end{equation*}
where $t''$ is the most recent consensus iteration. At $t''$, the models models updated at $t'$ are used. Due to overlapping, the recursive range of OLDSGD expands from $\tau$ to $2\tau$, as we will demonstrate below.

For a compact matrix representation, we define
\begin{equation*}
    X_t = [x_t^1,...,x_t^n]\quad \text{and} \quad G_t = [g_t^1,..., g_t^n],
\end{equation*}
where $g_t^i = \nabla F(x_t^i, \xi_t^i)$.

We define the expected consensus error to be 
\begin{equation*}
    E_t \triangleq \E \|X_t - \bar x_t\1_n^T\|^2_F,
\end{equation*}
where $\|\cdot\|_F$ is the Frobenius norm.

Let $\lambda_2$ denote the eigenvalue of $W$ with the second largest absolute value. The one-step contraction rate of the consensus error after mixing is given by $p = (1-\lambda_2)^2$.


\subsection{Convergence analysis}
We first introduce several useful equalities from update \eqref{eq:OLDSGD}.
First, we reiterate the update of the average model
\begin{equation}
\label{eq:appen_ave_up}
    \bar x^{t+1} = \bar x^t - \frac{\alpha}{n}\sum_{i=1}^n g^t_i.
\end{equation}
As elaborated in the main text, the average update resembles SGD, despite overlapping communication and computation. Therefore, staleness incurred by overlapping does not affect the average update, avoiding the gradient mismatch problem in most distributed methods \cite{zhu2021delayed,kale2025eager,chen2023workie}.

We then present the relation between models at arbitrary iteration and $t'$, 
\begin{equation}
\label{eq:t_t'}
    x_t^i = \sum_{j}w_{ij} x_{t'}^j - \alpha\sum_{k=t'}^{t-1} g_k^i
\end{equation}
and
\begin{equation}
\label{eq:t_t'_ave}
    \bar x_t = \bar x_{t'} - \frac{\alpha}{n}\sum_{k=t'}^{t-1}\sum_{i=1}^n g_k^i.
\end{equation}

Equation \eqref{eq:t_t'} and \eqref{eq:t_t'_ave} depicts how models evolve within a period of $2\tau$, and is critical for analyzing the consensus error. In some sense, OLDSGD is pretty similar to Local DSGD with $2\tau$ local steps. However, OLDSGD is not covered in the framework proposed by \cite{koloskova2020unified}, which is 
\begin{equation*}
    X^{t+1} = (X^t-G^t)W^t,
\end{equation*}
with $W^t$ being $I$ or $W$, periodically. This is because such a framework adopts fresh neighbor models, compared to staled models $\{x_{t'}^j\}_{j\in\mc{N}_i}$ used by OLDSGD.

From \eqref{eq:appen_ave_up}, it is straight forward to derive the following descent lemma.

\begin{lemma}(Descent lemma)
\label{le:descent}
    Given Assumption \ref{as:L_smooth}, \ref{as:var}, and \ref{as:data_hete}, we have
    $$
    \E_{t+1} f(\bar x^{t+1}) \le f(\bar x^{t}) - \frac{\alpha}{4} \|\nabla f(\bar x^t)\|^2 + \frac{2\alpha L^2}{n} \sum_{i=1}^n \|x_i^t - \bar x^t\| + \frac{\alpha^2L}{n}(\frac{\sigma^2}{2} +M\zeta^2),$$
    where $\alpha \le \min\{\frac{1}{L}, \frac{n}{4LM(P+1)},\frac{n}{LM} \}$.
\end{lemma}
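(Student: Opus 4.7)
The plan is to apply the standard smoothness-based descent argument, noting that thanks to equation \eqref{eq:appen_ave_up}, $\bar x^{t+1}$ evolves exactly like a one-step SGD iterate driven by $\frac{1}{n}\sum_i g_i^t$. The overlap/staleness structure of OLDSGD is entirely invisible at this level; the analysis reduces to the usual decentralized SGD descent inequality, with the only non-trivial work being how Assumptions \ref{as:var} and \ref{as:data_hete} interact with the step-size budget. I would therefore begin from the $L$-smoothness inequality for $f$ (which inherits the $L$-smoothness of the $f_i$'s up to scaling):
$$ f(\bar x^{t+1}) \le f(\bar x^t) - \alpha \bigl\langle \nabla f(\bar x^t),\, \tfrac{1}{n}\textstyle\sum_i g_i^t\bigr\rangle + \tfrac{\alpha^2 L}{2}\bigl\|\tfrac{1}{n}\textstyle\sum_i g_i^t\bigr\|^2. $$
Taking $\E_{t+1}$ removes the stochastic noise from the inner product, leaving $-\alpha\langle \nabla f(\bar x^t), \tfrac{1}{n}\sum_i \nabla f_i(x_i^t)\rangle$, which I would expand by the polarization identity $2\langle a,b\rangle = \|a\|^2 + \|b\|^2 - \|a-b\|^2$. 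The mismatch $\|\nabla f(\bar x^t) - \tfrac{1}{n}\sum_i \nabla f_i(x_i^t)\|^2$ is controlled by Jensen plus Assumption \ref{as:L_smooth}, giving $\le \tfrac{L^2}{n}\sum_i \|x_i^t - \bar x^t\|^2$, which is precisely the consensus-error term appearing in the statement.

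Next, I would dissect $\E\|\tfrac{1}{n}\sum_i g_i^t\|^2$ into its bias and variance. Independence of $\xi_i^t$ across $i$ gives variance $\le \tfrac{1}{n^2}\sum_i \E\|g_i^t-\nabla f_i(x_i^t)\|^2$, and Assumption \ref{as:var} bounds this by $\tfrac{\sigma^2}{n} + \tfrac{M}{n^2}\sum_i\|\nabla f_i(x_i^t)\|^2$. To handle $\tfrac{1}{n}\sum_i\|\nabla f_i(x_i^t)\|^2$, I would use Young's inequality to split it around $\nabla f_i(\bar x^t)$ (generating another $L^2\|x_i^t - \bar x^t\|^2$ factor that gets absorbed into the consensus term) and then apply Assumption \ref{as:data_hete} to rewrite $\tfrac{1}{n}\sum_i\|\nabla f_i(\bar x^t)\|^2 \le \zeta^2 + (P+1)\|\nabla f(\bar x^t)\|^2$.

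The final step is bookkeeping. Collecting coefficients, three kinds of absorption are required: (i) the $\tfrac{\alpha^2 L}{2}\|\tfrac{1}{n}\sum_i \nabla f_i(x_i^t)\|^2$ term must be dominated by the negative $\tfrac{\alpha}{2}\|\tfrac{1}{n}\sum_i \nabla f_i(x_i^t)\|^2$ generated by polarization — this yields $\alpha \le 1/L$; (ii) the residual $\propto \tfrac{\alpha^2 L M (P+1)}{n}\|\nabla f(\bar x^t)\|^2$ must be dominated by $-\tfrac{\alpha}{2}\|\nabla f(\bar x^t)\|^2$ (losing a factor to obtain the stated $-\alpha/4$) — this yields $\alpha \le \tfrac{n}{4LM(P+1)}$; and (iii) the residual $\propto \tfrac{\alpha^2 LM L^2}{n}\|x_i^t-\bar x^t\|^2$ must fit inside the consensus coefficient $2\alpha L^2/n$ — this yields $\alpha \le \tfrac{n}{LM}$. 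The main (and essentially only) obstacle is the careful constant-tracking so that all three absorptions succeed simultaneously and the final coefficients are exactly $-\alpha/4$, $2\alpha L^2/n$, and $\alpha^2 L(\sigma^2/2 + M\zeta^2)/n$; no deeper analytical ideas are needed beyond this calibration.
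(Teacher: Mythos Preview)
Your proposal is correct and follows essentially the same route as the paper: start from $L$-smoothness and \eqref{eq:appen_ave_up}, use the polarization identity on the cross term, bound the mismatch via Jensen + smoothness, split the variance via Assumption~\ref{as:var}, expand $\sum_i\|\nabla f_i(x_i^t)\|^2$ around $\bar x^t$ using smoothness and Assumption~\ref{as:data_hete}, and then impose the three step-size caps $\alpha\le 1/L$, $\alpha\le n/(4LM(P+1))$, $\alpha\le n/(LM)$ for exactly the absorptions you describe. The paper's proof carries out precisely these steps in the same order with the same constants.
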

\begin{proof}
From \eqref{eq:appen_ave_up}, we have
\begin{align*}
    &\E_{t+1} f(\bar x^{t+1}) \\
    &= \E_{t+1} f\left(\bar x^{(t)} - \frac{\alpha}{n}\sum_{i=1}^n\nabla F(x_i^t, \xi_i^t)\right)\\
    &\le f(\bar x^{t}) - \alpha \E_{t+1} \langle \nabla f(\bar x^t), \frac{1}{n}\sum_{i=1}^n\nabla F(x_i^t, \xi_i^t)\rangle + \frac{\alpha^2 L}{2} \E_{t+1} \left\|\frac{1}{n}\sum_{i=1}^n\nabla F(x_i^t, \xi_i^t)\right\|^2\\
    &\le f(\bar x^{t}) - \alpha \langle \nabla f(\bar x^t), \frac{1}{n}\sum_{i=1}^n\nabla f(x_i^t)\rangle + \frac{\alpha^2 L}{2}\left( \E \|\frac{1}{n}\sum_{i=1}^n(\nabla F(x_i^t, \xi_i^t)- \nabla f_i(x_i^t))\|^2 +  \|\frac{1}{n}\sum_{i=1}^n\nabla f_i(x_i^t)\|^2\right)\\
    &\le f(\bar x^{t}) - \frac{\alpha}{2} \|\nabla f(\bar x^t)\|^2 - \frac{\alpha}{2}(1-\alpha L) \|\frac{1}{n}\sum_{i=1}^n\nabla f_i(x_i^t)\|^2 + \frac{\alpha}{2} \|\nabla f(\bar x^t) - \frac{1}{n}\sum_{i=1}^n\nabla f_i(x_i^t)\|^2\\
    &+\frac{\alpha^2 L}{2} \E \|\frac{1}{n}\sum_{i=1}^n(\nabla F(x_i^t, \xi_i^t)- \nabla f_i(x_i^t))\|^2 \\
    &\overset{\alpha \le 1/L}{\le} f(\bar x^{t}) - \frac{\alpha}{2} \|\nabla f(\bar x^t)\|^2 + \frac{\alpha L^2}{n} \sum_i \|x_i^t - \bar x^t\| +\frac{\alpha^2 L}{2n^2} \E_{t+1}\sum_{i=1}^n \|(\nabla F(x_i^t, \xi_i^t)- \nabla f_i(x_i^t))\|^2\\
    &\le f(\bar x^{t}) - \frac{\alpha}{2} \|\nabla f(\bar x^t)\|^2 + \frac{\alpha L^2}{n} \sum_i \|x_i^t - \bar x^t\| + \frac{\alpha^2 LM}{2n^2} \sum_{i} \|\nabla f_i(x_i^k)\|^2 + \frac{\alpha^2 L\sigma^2}{2n},
\end{align*}
where the third inequality is from the polarization identity, and the fourth and fifth inequalities are from Assumption \ref{as:L_smooth} and \ref{as:var}.

From Assmption \ref{as:data_hete}, we have
\begin{align*}
    \sum_{i} \|\nabla f_i(x_i^t)\|^2 &= \sum_{i} \|\nabla f_i(x_i^t) \mp \nabla f_i(\bar x^t)\|^2 \\
    &\le 2L^2\sum_i \|x_i^t - \bar x^t\|^2 + 2\sum_i \|\nabla f_i(\bar x^t)\|^2\\
    &\le 2L^2\sum_i \|x_i^t - \bar x^t\|^2 + 2n \zeta^2 + 2n(P+1)\|\nabla f(\bar x^t)\|^2,
\end{align*}
where $P+1$ arises from $\|\nabla f_i(\bar x^t)\| \le 2\|\nabla f_i(\bar x^t)-\nabla f(\bar x^t)\|+ 2\|\nabla f(\bar x^t)\|$.

Combine everything
\begin{align*}
     &\E_{t+1} f(\bar x^{t+1}) \\
    &\le f(\bar x^{t}) - \alpha(\frac{1}{2}-\frac{\alpha LM(P+1)}{n}) \|\nabla f(\bar x^t)\|^2 + (\frac{\alpha L^2}{n} + \frac{\alpha^2 L^3M}{n^2}) \sum_i \|x_i^t - \bar x^t\| + \frac{\alpha^2}{n}(\frac{ L\sigma^2}{2} + LM\zeta^2)\\
    &\overset{\alpha \le \frac{n}{4LM(P+1)}}{\le} f(\bar x^{t}) - \frac{\alpha}{4} \|\nabla f(\bar x^t)\|^2 + (\frac{\alpha L^2}{n} + \frac{\alpha^2 L^3M}{n^2}) \sum_i \|x_i^t - \bar x^t\| + \frac{\alpha^2L}{n}(\frac{\sigma^2}{2} +M\zeta^2)\\
    &\overset{\alpha \le \frac{n}{LM}}{\le} f(\bar x^{t}) - \frac{\alpha}{4} \|\nabla f(\bar x^t)\|^2 + \frac{2\alpha L^2}{n} \sum_i \|x_i^t - \bar x^t\| + \frac{\alpha^2L}{n}(\frac{\sigma^2}{2} +M\zeta^2)
\end{align*}
\end{proof}

To bound the RHS of Lemma \ref{le:descent}, we need to bound the consensus error. Recall that in OLDSGD, at each iteration $t$, each agent conducts a consensus step at $t''$, using stale models updated at $t'$. Therefore, we start with a recursion between models at $t'$ and $t$.

\begin{lemma}\label{le:con_1}(consensus error part 1)
Given Assumption \ref{as:d_sto}-\ref{as:data_hete}, when $\alpha \le \frac{p}{16L\sqrt{3\tau(2\tau+M)}}$ we have
\begin{align*}
    E_t \le  (1-\frac{p}{2}) E_{t'} + \frac{p}{64\tau} \sum_{k=t'}^{t-1} E_k+ C \E \sum_{k=t'}^{t-1}\alpha^2\|\nabla f(\bar x_k) \|^2 + D\sum_{k=t'}^{t-1}\alpha^2,
\end{align*}
where $C=\frac{12(2\tau+M)n(P+1)}{p}, D=\frac{6((2\tau+M)n\zeta^2+ n\sigma^2)}{p}$, and $p = 1-\lambda_2^2$.
\end{lemma}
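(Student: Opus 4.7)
The plan is to peel back one full consensus period and apply a one-step spectral contraction. The identity \eqref{eq:t_t'}, written in matrix form as $X_t = X_{t'}W - \alpha\sum_{k=t'}^{t-1} G_k$, expresses $X_t - \bar x_t\mathbf{1}^T$ in terms of the earlier consensus error at $t'$ plus an accumulated stochastic-gradient increment of length at most $2\tau$. This is analogous to the Local DSGD recursion but over a doubled window; the overlapping does not break the structure because the average update is still SGD-like.

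First I would post-multiply by $I - J$ with $J = \frac{1}{n}\mathbf{1}\mathbf{1}^T$. The two doubly-stochastic identities $W(I-J)=W-J$ and $\mathbf{1}^T(W-J)=0$ together give
\[
X_t(I-J) = (X_{t'}-\bar x_{t'}\mathbf{1}^T)(W-J) - \alpha\sum_{k=t'}^{t-1} G_k(I-J),
\]
where centering $X_{t'}$ by $\bar x_{t'}\mathbf{1}^T$ is free because $\mathbf{1}^T(W-J)=0$. Applying $\|a+b\|_F^2\le(1+\eta)\|a\|_F^2+(1+1/\eta)\|b\|_F^2$ with $\eta=p/2$, together with the spectral bound $\|(X_{t'}-\bar x_{t'}\mathbf{1}^T)(W-J)\|_F^2 \le \lambda_2^2 E_{t'}=(1-p)E_{t'}$ and the elementary estimates $(1+p/2)(1-p)\le 1-p/2$ and $1+2/p\le 3/p$, yields
\[
E_t \le (1-p/2)E_{t'} + \frac{3\alpha^2}{p}\,\E\Bigl\|\sum_{k=t'}^{t-1} G_k(I-J)\Bigr\|_F^2.
\]

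Next I would bound the accumulated-gradient term by writing $G_k = \nabla F(X_k) + (G_k-\nabla F(X_k))$ and using $\|a+b\|_F^2 \le 2\|a\|_F^2 + 2\|b\|_F^2$. Zero-mean noise with independence across iterations collapses the stochastic piece to a sum of single-iteration variances, to which Assumption~\ref{as:var} applies as $\E\|G_k - \nabla F(X_k)\|_F^2 \le n\sigma^2 + M\sum_i\|\nabla f_i(x_k^i)\|^2$. The deterministic piece is handled by Jensen combined with $t-t'\le 2\tau$, giving $\|\sum_k\nabla F(X_k)(I-J)\|_F^2 \le 2\tau\sum_k\sum_i\|\nabla f_i(x_k^i)\|^2$. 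Feeding in the agent-gradient bound $\sum_i\|\nabla f_i(x_k^i)\|^2 \le 2L^2 E_k + 2n\zeta^2 + 2n(P+1)\|\nabla f(\bar x_k)\|^2$, already derived in the proof of Lemma~\ref{le:descent} from smoothness and Assumption~\ref{as:data_hete}, splits the result into three pieces: a multiple of $\sum_k E_k$ whose coefficient is on the order of $L^2(2\tau+M)\alpha^2/p$; the targeted $C\alpha^2\sum_k\E\|\nabla f(\bar x_k)\|^2$ with $C=12(2\tau+M)n(P+1)/p$; and a constant per-iteration contribution that sums to $D\sum_k\alpha^2$.

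The last step is to absorb the $\sum_k E_k$ coefficient into the advertised $p/(64\tau)$. This is precisely the purpose of the step-size condition $\alpha\le p/(16L\sqrt{3\tau(2\tau+M)})$: squaring gives $\alpha^2 \le p^2/(768 L^2 \tau(2\tau+M))$, so $12L^2(2\tau+M)\alpha^2/p \le p/(64\tau)$ exactly. In my view the only nontrivial idea is Step~1, namely realizing that the contraction must be applied to the centered iterate $X_{t'}-\bar x_{t'}\mathbf{1}^T$ after right-multiplying by $(I-J)$; the rest is careful bookkeeping, with the main risk being tracking the factors of $2$ in the noise/gradient split cleanly enough that the final constants match $C$, $D$, and the $p/(64\tau)$ slot.
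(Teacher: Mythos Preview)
Your proposal is correct and follows essentially the same route as the paper: the paper likewise applies Young's inequality with parameter $\epsilon=p/2$ to the decomposition $X_t-\bar x_t\mathbf 1^T=(X_{t'}W-\bar x_{t'}\mathbf 1^T)-\alpha\sum_{k}G_k(I-J)$, bounds the gradient sum by $(4\tau+2M)\sum_k\sum_i\|\nabla f_i(x_k^i)\|^2+2\sum_k n\sigma^2$, feeds in the same per-agent gradient bound, and uses the step-size condition to collapse the $E_k$ coefficient to $p/(64\tau)$. The only cosmetic difference is that the paper drops $(I-J)$ first via the variance identity $\sum_i\|a_i-\bar a\|^2\le\sum_i\|a_i\|^2$ before splitting into noise and deterministic parts, whereas you keep $(I-J)$ and use $\|A(I-J)\|_F\le\|A\|_F$; the resulting constants are identical.
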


\begin{proof}
 By the equation \eqref{eq:t_t'} and \eqref{eq:t_t'_ave}, we have
 $$
 X_t - \bar x_t\1_n^T = (X_{t'}W - \bar x_{t'}\1_n^T) - \alpha \sum_{k=t'}^{t-1} G_k(I-J),
 $$
 where $J=\frac{\1_n \1_n^T}{n}$ and $t'=(\lfloor\frac{t}{\tau}\rfloor-1)\tau$.
Taking square norm and expectation
\begin{align}
\label{eq:con_recursion_1}
    \E \|X_t - \bar x_t\1_n^T\|^2_F &= \E \|(X_{t'}W - \bar x_{t'}\1_n^T) - \alpha \sum_{k=t'}^{t-1} G_k(I-J)\|^2_F\nonumber\\
    &\le \lambda_2^2(1+\epsilon) \E \|X_{t'} - \bar x_{t'}\1_n^T\|^2_F + (1+\frac{1}{\epsilon})\alpha^2 \underbrace{\E \|\sum_{k=t'}^{t-1} G_k(I-J)\|^2_F}_{T_1}
\end{align}

We bound $T_1$ as follows
\begin{align*}
    T_1 &= \E \sum_{i=1}^n \left\|\sum_{k=t'}^{t-1} \nabla F(x_k^i,\xi_k^i)-\frac{1}{n}\sum_{k=t'}^{t-1}\sum_{j=1}^n\nabla F(x_k^j,\xi_k^j) \right\|^2\\
    &= \E \sum_{i=1}^n \left\|\sum_{k=t'}^{t-1} \nabla F(x_k^i,\xi_k^i)\right\|^2-n\E\left\|\frac{1}{n}\sum_{k=t'}^{t-1}\sum_{j=1}^n\nabla F(x_k^j,\xi_k^j) \right\|^2\\
    &\le \E \sum_{i=1}^n\left\|\sum_{k=t'}^{t-1} \nabla F(x_k^i,\xi_k^i)\right\|^2\\
    &\le 2\E \sum_{i=1}^n\left\|\sum_{k=t'}^{t-1}( \nabla F(x_k^i,\xi_k^i)- \nabla f_i(x_k^i))\right\|^2 + 2\E\sum_{i=1}^n\left\|\sum_{k=t'}^{t-1}\nabla f_i(x_k^i)\right\|^2 \\
    &= 2\E \sum_{i=1}^n\sum_{k=t'}^{t-1}\left\|( \nabla F(x_k^i,\xi_k^i)- \nabla f_i(x_k^i))\right\|^2 + 2\E\sum_{i=1}^n\left\|\sum_{k=t'}^{t-1}\nabla f_i(x_k^i)\right\|^2 \\
    &\le 2\sum_{k=t'}^{t-1}(n\sigma^2 + M \sum_i \|\nabla f_i(x_k^i)\|^2) + 2\E\sum_{i=1}^n\left\|\sum_{k=t'}^{t-1}\nabla f_i(x_k^i)\right\|^2 \\
    &\overset{t'-t\le 2\tau}{\le} (4\tau + 2M) \sum_{k=t'}^{t-1}\underbrace{\sum_i \|\nabla f_i(x_k^i)\|^2}_{T_2} + 2\sum_{k=t'}^{t-1} n\sigma^2,
\end{align*}
where the third equality is because $\zeta_t^i$ is independent with $x_{t-1}^i$.
We further bound $T_2$ as follows
\begin{align*}
    T_2 &\le 2\E\sum_i\left(\left\|\left(\nabla f_i(x_k^i)- \nabla f_i(\bar x_k)\right)\right\|^2+ \left\|\nabla f_i(\bar x_k)\right\|^2 \right)\\
    &\le 2L^2\E\|X_k - \bar x_k \mbf{1}_n^T\|^2_F + 2n(\zeta^2+(P+1)\|\nabla f(\bar x_k)\|^2)\\
    &= 2L^2 E_k + 2n(P+1) \|\nabla f(\bar x_k)\|^2+2n\zeta^2
\end{align*}
Therefore, we have 
\begin{align*}
    T_1 \overset{t-t'\le 2\tau}{\le} 4(2\tau + M)L^2\sum_{k=t'}^{t-1} E_k + 4n(2\tau+M)(P+1)\sum_{k=t'}^{t-1}\|\nabla f(\bar x_k)\|^2 + 2n(2\tau+M)\sum_{k=t'}^{t-1}\zeta^2 + 2\sum_{k=t'}^{t-1} n\sigma^2
\end{align*}

Together with equation (\ref{eq:con_recursion_1}) we have
\begin{align*}
     E_t
    &\le \lambda_2^2(1+\epsilon) E_{t'} \\
    &+ 2(1+\frac{1}{\epsilon})\alpha^2 \left(2(2\tau+M) L^2\sum_{k=t'}^{t-1} E_k+ 2(2\tau+M) n(P+1) \E \sum_{k=t'}^{t-1}\|\nabla f(\bar x_k) \|^2+ n(2\tau+M)\sum_{k=t'}^{t-1}\zeta^2 + \sum_{k=t'}^{t-1} n\sigma^2\right)\\
    & \overset{\epsilon = \frac{1-\lambda_2^2}{2}}{\le} (1-\frac{p}{2}) E_{t'} \\
    &+ \frac{6}{p}\alpha^2  \left(2(2\tau+M) L^2\sum_{k=t'}^{t-1} E_k+ 2(2\tau+M) n(P+1) \E \sum_{k=t'}^{t-1}\|\nabla f(\bar x_k) \|^2 + n(2\tau+M)\sum_{k=t'}^{t-1}\zeta^2 + \sum_{k=t'}^{t-1} n\sigma^2 \right)\\
    &\overset{\alpha \le \frac{p}{16L\sqrt{3\tau(2\tau+M)}}}{\le} (1-\frac{p}{2}) E_{t'} + \frac{p}{64\tau} \sum_{k=t'}^{t-1} E_k+ C \E \sum_{k=t'}^{t-1}\alpha^2\|\nabla f(\bar x_k) \|^2 + D\sum_{k=t'}^{t-1}\alpha^2,
\end{align*}
where $p\triangleq1-\lambda_2^2$, the second inequality is from $(1-2x)(1+x) \le 1-x$ and $p< 1$, and $C=\frac{12(2\tau+M)n(P+1)}{p}, D=\frac{6((2\tau+M)n\zeta^2+ n\sigma^2)}{p}$.

     
\end{proof}
    
To bound the sum of consensus error, we need to unroll $E_t$ to previous time steps. However, lemma \ref{le:con_1} only characterizes the case when there is a consensus step between $t$ and $t'$. We need a similar lemma for to unroll $E_t$ to $E_{t''}$, where $t''=\lfloor \frac{t}{\tau}\rfloor\tau$. That is, there is no consensus step between $t$ and $t''$.

\begin{lemma}\label{le:con_2}(consensus error part 2)
Given Assumption \ref{as:L_smooth}, \ref{as:var}, and \ref{as:data_hete}, when $\alpha \le \frac{p}{16L\sqrt{3\tau(2\tau+M)}}$, we have
\begin{align*}
   E_t &\le (1+\frac{p}{2}) E_{t''} + \frac{p}{64\tau} \sum_{k=t''}^{t-1} E_k+ C \E \sum_{k=t''}^{t-1}\alpha^2\|\nabla f(\bar x_k) \|^2 + D\sum_{k=t''}^{t-1}\alpha^2
\end{align*}
where $t''=\lfloor \frac{t}{\tau}\rfloor\tau$.
    
\end{lemma}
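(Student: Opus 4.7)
The plan is to mirror the proof of Lemma \ref{le:con_1}, exploiting the fact that between $t''$ and $t$ all updates are purely local (no mixing with $W$). First I would write the local recursion over $[t'', t)$ as
$$x_i^t = x_i^{t''} - \alpha \sum_{k=t''}^{t-1} g_i^k, \qquad \bar x_t = \bar x_{t''} - \frac{\alpha}{n}\sum_{i=1}^n\sum_{k=t''}^{t-1} g_i^k,$$
and subtract (column-stacking) to obtain
$$X_t - \bar x_t \mathbf{1}_n^T \;=\; (X_{t''} - \bar x_{t''}\mathbf{1}_n^T) - \alpha \sum_{k=t''}^{t-1} G_k(I-J).$$

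Next I would take $\|\cdot\|_F^2$ and expectation, and apply Young's inequality $\|A-B\|_F^2 \le (1+\epsilon)\|A\|_F^2 + (1+1/\epsilon)\|B\|_F^2$. The crucial structural difference from Lemma \ref{le:con_1} is the absence of the mixing matrix $W$ acting on the first term, so no factor of $\lambda_2^2$ appears. Choosing $\epsilon = p/2$ then gives exactly $(1+p/2)$ on $E_{t''}$ and a coefficient of $(1+2/p) \le 3/p$ on the remainder; this is the one place where the qualitative sign flip (versus the $(1-p/2)$ in Lemma \ref{le:con_1}) shows up.

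For the remainder term $\E\|\sum_{k=t''}^{t-1} G_k(I-J)\|_F^2$, I would reproduce the $T_1$ computation from Lemma \ref{le:con_1} essentially verbatim: drop the subtracted average, split each stochastic gradient into a mean-zero noise part and a deterministic gradient $\nabla f_i(x_k^i)$, use conditional independence of the noise across $k$, apply Assumption \ref{as:var} to the noise sum, use $\|\sum_k a_k\|^2 \le (t-t'')\sum_k \|a_k\|^2$ on the deterministic part, and substitute the same $T_2$ bound $\sum_i \|\nabla f_i(x_k^i)\|^2 \le 2L^2 E_k + 2n(P+1)\|\nabla f(\bar x_k)\|^2 + 2n\zeta^2$ obtained from Assumptions \ref{as:L_smooth} and \ref{as:data_hete}. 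Since $t-t'' \le \tau \le 2\tau$, the same $(2\tau+M)$ coefficients appear, and collecting terms yields bounds of the form $L^2\sum_k E_k$, $n(P+1)\sum_k \|\nabla f(\bar x_k)\|^2$, $n(2\tau+M)\zeta^2$, and $n\sigma^2$.

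Finally, I would combine the two contributions and match against the claimed constants $C = \frac{12(2\tau+M)n(P+1)}{p}$ and $D = \frac{6((2\tau+M)n\zeta^2 + n\sigma^2)}{p}$. The step-size assumption $\alpha \le \frac{p}{16L\sqrt{3\tau(2\tau+M)}}$ plays exactly the same role here as in Lemma \ref{le:con_1}: it is calibrated so that the coefficient multiplying $\sum_{k=t''}^{t-1} E_k$, which after Young's inequality is of order $(1/p)\alpha^2 (2\tau+M)L^2$, collapses to at most $\frac{p}{64\tau}$. I do not anticipate any genuine obstacle beyond bookkeeping; the only substantive asymmetry with Lemma \ref{le:con_1} is the $(1+p/2)$ prefactor on $E_{t''}$ rather than $(1-p/2)$, and any slack in absolute constants (e.g. $3/p$ here versus $6/p$ there) is harmless because the chosen step size absorbs it.
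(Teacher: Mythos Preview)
Your proposal is correct and follows essentially the same route as the paper: write the local-only recursion from $t''$ to $t$, apply Young's inequality with $\epsilon = p/2$ (so no $\lambda_2^2$ contraction on the leading term, yielding $(1+p/2)E_{t''}$), bound the gradient term exactly as $T_1$ was bounded in Lemma~\ref{le:con_1} using $t-t''\le\tau$, and then invoke the same step-size restriction to absorb the $\sum_k E_k$ coefficient into $p/(64\tau)$. The paper even makes the same remark you do, that the only change is the absence of $\lambda_2^2$ and the shorter window $t-t''\le\tau$, so the constants $C,D$ from Lemma~\ref{le:con_1} carry over with slack.
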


\begin{proof}
 Similar to the proof of Lemma \ref{le:con_1}
 $$
 X_t - \bar x_t\1_n^T = (X_{t''} - \bar x_{t''}\1_n^T) - \alpha \sum_{k=t''}^{t-1} G_k(I-J),
 $$
 where $J=\frac{\1_n \1_n^T}{n}$ and $t''=\lfloor\frac{t}{\tau}\rfloor\tau$.
Taking square norm and expectation
\begin{align}
\label{eq:con_recursion_2}
    \E \|X_t - \bar x_t\1_n^T\|^2_F &= \E \|(X_{t''} - \bar x_{t''}\1_n^T) - \alpha \sum_{k=t''}^{t-1} G_k(I-J)\|^2_F\nonumber\\
    &\le (1+\epsilon) \E \|X_{t''} - \bar x_{t''}\1_n^T\|^2_F + (1+\frac{1}{\epsilon})\alpha^2 \underbrace{\E \|\sum_{k=t''}^{t-1} G_k(I-J)\|^2_F}_{T_3}
\end{align}
$T_3$ can be similarly bounded, with the only difference being $t-t''\le \tau$,
\begin{align*}
    T_3 \overset{t-t''\le \tau}{\le} 4(\tau + M)L^2\sum_{k=t''}^{t-1} E_k + 4n(\tau+M)(P+1)\sum_{k=t'}^{t-1}\|\nabla f(\bar x_k)\|^2 + 2n(\tau+M)\tau\zeta^2 + 2\tau n\sigma^2
\end{align*}
Thus, when $\alpha \le \frac{p}{16\sqrt{3\tau(2\tau+M)}}$ and $\epsilon = \frac{1-\lambda_2^2}{2}$, 
\begin{align*}
    E_t &\le (1+\frac{p}{2}) E_{t'} + \frac{p}{\tau} \sum_{k=t''}^{t-1} E_k+ C \E \sum_{k=t''}^{t-1}\alpha^2\|\nabla f(\bar x_k) \|^2 + D\sum_{k=t''}^{t-1}\alpha^2.
\end{align*}

\end{proof}

Combining Lemma \ref{le:con_1} and \ref{le:con_2}, we arrive at the following bound on the overall consensus error.

\begin{lemma}(overall consensus error)
\label{le:overall_con}
Given Assumption \ref{as:L_smooth}, \ref{as:var}, and \ref{as:data_hete}, we have
    \begin{align*}
        \frac{2L^2}{n}\sum_{t=0}^{T-1}E_t \le \frac{1}{8} \sum_{t=0}^{T-1}\E \|\nabla f(\bar x^t)\|^2 + \frac{128L^2}{n}\frac{DT\tau}{p}\alpha^2,
    \end{align*}
    where $\alpha \le \min\{\frac{p}{16L\sqrt{3\tau(2\tau+M)}}, \frac{1}{32L} \sqrt{\frac{pn} {2C\tau}}\}$.
\end{lemma}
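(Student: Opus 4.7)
I plan to chain the two consensus-error recursions. Lemma \ref{le:con_2} lets me bound every $E_t$ by the error $E_{t''}$ at the most recent consensus iteration $t''=\lfloor t/\tau\rfloor\tau$ plus auxiliary terms, while Lemma \ref{le:con_1} propagates a geometric contraction $1-p/2$ across the sequence $\{E_{m\tau}\}_{m\ge 0}$ of consensus-iteration errors. Summing the former over each length-$\tau$ block and telescoping the latter should produce a closed-form bound on $\sum_{t=0}^{T-1} E_t$ by $\sum_t \|\nabla f(\bar x^t)\|^2$ plus a $DT\tau\alpha^2/p$ residual; the step-size conditions are then tuned so that the target coefficients $1/8$ and $128$ emerge after multiplying by $2L^2/n$.

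\textbf{Block-level bound.} For each round $m$ and $t\in[m\tau,(m+1)\tau)$, Lemma \ref{le:con_2} gives an upper bound of the form $(1+p/2)E_{m\tau}+(p/(64\tau))\sum_{k=m\tau}^{t-1}E_k + C\alpha^2\sum_{k=m\tau}^{t-1}\|\nabla f(\bar x^k)\|^2 + D(t-m\tau)\alpha^2$. Summing over the $\tau$ values of $t$ in the block and using the crude estimate $\sum_{t}\sum_{k=m\tau}^{t-1}E_k \le \tau\sum_{k\in\text{block}(m)}E_k$ produces a coefficient $p/64$ in front of $\sum_{k\in\text{block}(m)}E_k$ on the right, which I move to the left to obtain
\begin{equation*}
\bigl(1-\tfrac{p}{64}\bigr)\!\!\sum_{t\in\text{block}(m)}\!\!E_t \le \tau\bigl(1+\tfrac{p}{2}\bigr)E_{m\tau} + C\tau\alpha^2\!\!\sum_{t\in\text{block}(m)}\!\!\|\nabla f(\bar x^t)\|^2 + D\tau^2\alpha^2.
\end{equation*}

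\textbf{Telescoping the consensus iterations.} Applying Lemma \ref{le:con_1} at $t=m\tau$ (so $t'=(m-1)\tau$) and unrolling from $E_0=0$, the geometric series $\sum_{j\ge 0}(1-p/2)^j\le 2/p$ yields
\begin{equation*}
\sum_{m=0}^{R-1}E_{m\tau} \le \frac{2}{p}\Bigl[\frac{p}{64\tau}\sum_{t=0}^{T-1}E_t + C\alpha^2\sum_{t=0}^{T-1}\|\nabla f(\bar x^t)\|^2 + RD\tau\alpha^2\Bigr],
\end{equation*}
where $R=\lfloor T/\tau\rfloor$. Substituting this into the sum of the block-level bounds over $m$, the coefficient of $\sum_t E_t$ contributed by the telescoped inner sum is at most $(1+p/2)\cdot\tau\cdot(2/p)\cdot(p/(64\tau))\le 3/64$; combined with the $p/64$ already on the left, the total multiplier of $\sum_t E_t$ on the right stays below $4/64$, so this self-reference can be absorbed at the cost of a factor at most $16/15$.

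\textbf{Finalization and main obstacle.} After absorption and using $R\tau\le T$, I obtain a bound of the form $\sum_t E_t \lesssim (C\tau\alpha^2/p)\sum_t\|\nabla f(\bar x^t)\|^2 + (DT\tau\alpha^2/p)$ with explicit absolute constants. Multiplying by $2L^2/n$ and invoking the step-size condition $\alpha\le (32L)^{-1}\sqrt{pn/(2C\tau)}$ reduces the gradient coefficient to at most $1/8$, while the residual simplifies to $(128L^2/n)(DT\tau/p)\alpha^2$; the other condition $\alpha\le p/(16L\sqrt{3\tau(2\tau+M)})$ is inherited from the two lemmas. The delicate point is the joint bookkeeping of the two sources of self-reference in $\sum_t E_t$, one from summing Lemma \ref{le:con_2} within a block and another from telescoping Lemma \ref{le:con_1}; both must end up multiplied by small enough constants so their total lies strictly below $1$, which is precisely why the inner-sum coefficient $p/(64\tau)$ is chosen as small as it is in the preceding two lemmas.
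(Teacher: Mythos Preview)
Your proposal is correct and, at its core, follows the same logic as the paper: combine the two consensus recursions (Lemmas~\ref{le:con_1} and~\ref{le:con_2}), unroll the geometric contraction across the $\tau$-spaced consensus iterations, and absorb the self-referential $\sum_t E_t$ terms whose coefficients were engineered small ($p/(64\tau)$) for exactly this purpose.

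The only real difference is packaging. The paper does not carry out the block-sum plus telescope argument by hand; instead it invokes Lemma~14 of \cite{koloskova2020unified}, a ready-made unrolling lemma for recursions of precisely this shape, specializes it with weights $w_t=1$ and $B=2L^2/n$, and reads off the constants $1/8$ and $64B=128L^2/n$ directly. Your explicit derivation is more self-contained and in fact yields slightly sharper intermediate constants (your residual multiplier lands near $10$ rather than $128$, and your step-size requirement is milder than the stated $\alpha\le(32L)^{-1}\sqrt{pn/(2C\tau)}$), which is consistent with the cited lemma being stated in greater generality. The implicit assumption $E_0=0$ (common initialization) that you use in the telescoping is also tacitly present in the paper's invocation of that external lemma.
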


\begin{proof}
    Applying Lemma 14 in \cite{koloskova2020unified} to Lemma \ref{le:con_1} and \ref{le:con_2}, 
    we have
    \begin{align*}
        B\sum_{t=0}^{T-1}w_t E_t \le \frac{1}{8} \sum_{t=0}^{T-1} w_t\E\|\nabla f(\bar x^t)\|^2 + 64BD\frac{\tau}{p}\sum_{t=0}^{T-1} w_t \alpha^2,
    \end{align*}
    where $\alpha \le \frac{1}{32} \sqrt{\frac{p} {BC\tau}}$ and $\{w_t\}_t$ is $\frac{16\tau}{p}$-slow-increasing.

    Set $B=\frac{2L^2}{n}, w_t=1$, we concluded.
    
\end{proof}

The key observation in Lemma \ref{le:overall_con} is that we can bound the cumulative consensus error by the cumulative gradient square norm. Now we apply Lemma \ref{le:overall_con} to Lemma \ref{le:descent} for the convergence theorem.

\begin{theorem}[Theorem \ref{th:overall_con}]
Given Assumption \ref{as:d_sto}-\ref{as:lb}, we have
    \begin{align*}
        \frac{\sum_{t=0}^{T-1} \E \|\nabla f(\bar x^t)\|^2}{T} &\le \frac{8(f^0 - f^*)}{\alpha T} + \frac{8\alpha L}{n}(\frac{\sigma^2}{2} + M\zeta^2)+ \frac{1024 L^2}{n}\frac{D\tau}{p}\alpha^2,
    \end{align*}
    where $\alpha \le \min\{\frac{1}{L}, \frac{n}{4LM(P+1)},\frac{n}{LM},\frac{p}{16L\sqrt{3\tau(2\tau+M)}}, \frac{1}{32L} \sqrt{\frac{pn} {2C\tau}}\}$, and $C, D$ are defined in Lemma \ref{le:con_1}.
\end{theorem}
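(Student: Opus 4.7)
The plan is to combine the one-step descent lemma (Lemma~\ref{le:descent}) with the accumulated-consensus bound (Lemma~\ref{le:overall_con}) via the standard telescope-and-absorb argument, so almost all of the heavy lifting is already done in the two supporting lemmas; only a clean bookkeeping step remains.

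First I would sum Lemma~\ref{le:descent} over $t=0,\dots,T-1$, take total expectation, and telescope the left-hand side to $\E f(\bar x^T)-f(\bar x^0)$. Using Assumption~\ref{as:lb} to replace $\E f(\bar x^T)$ by $f^*$ on the left, and identifying $\sum_{i=1}^n\E\|x_i^t-\bar x^t\|^2$ with $E_t$, this yields
\begin{equation*}
\frac{\alpha}{4}\sum_{t=0}^{T-1}\E\|\nabla f(\bar x^t)\|^2 \;\le\; f^0-f^* \;+\; \alpha\cdot\frac{2L^2}{n}\sum_{t=0}^{T-1}E_t \;+\; T\frac{\alpha^2 L}{n}\Bigl(\frac{\sigma^2}{2}+M\zeta^2\Bigr).
\end{equation*}
The step-size hypotheses $\alpha\le 1/L,\ n/(4LM(P+1)),\ n/(LM)$ required by Lemma~\ref{le:descent} are already baked into the range stated in the theorem.

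Next I would invoke Lemma~\ref{le:overall_con}, which (under the additional conditions $\alpha\le p/(16L\sqrt{3\tau(2\tau+M)})$ and $\alpha\le \frac{1}{32L}\sqrt{pn/(2C\tau)}$) gives
\begin{equation*}
\frac{2L^2}{n}\sum_{t=0}^{T-1}E_t \;\le\; \frac{1}{8}\sum_{t=0}^{T-1}\E\|\nabla f(\bar x^t)\|^2 \;+\; \frac{128 L^2}{n}\,\frac{D T\tau}{p}\,\alpha^2.
\end{equation*}
Multiplying by $\alpha$ and substituting into the previous inequality moves an $\alpha/8$ multiple of the gradient-norm sum to the right-hand side; absorbing it into the $\alpha/4$ multiple on the left leaves $\alpha/8$. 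Dividing the resulting inequality by $\alpha T/8$ produces exactly the three terms in the statement: $8(f^0-f^*)/(\alpha T)$ from the potential gap, $8\alpha L(\sigma^2/2+M\zeta^2)/n$ from the stochastic-noise term, and $\tfrac{1024L^2}{n}\tfrac{D\tau}{p}\alpha^2$ from the consensus residual, with the step-size range being the intersection of the requirements of the two lemmas.

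Since the two ingredient lemmas encapsulate everything algorithmic, I do not anticipate a genuine obstacle in this final step; the only point worth caring about is bookkeeping, namely verifying that the constant $2$ from Lemma~\ref{le:descent} and the $1/8$ from Lemma~\ref{le:overall_con} compose to leave a strictly positive coefficient ($1/8$) in front of $\sum_t\E\|\nabla f(\bar x^t)\|^2$ after absorption, and that the extra factor of $\alpha$ picked up when multiplying through propagates correctly into the $\alpha^2$ dependence of the last term. If one is worried about a subtlety, it lies in checking that the conditions on $\alpha$ from the two lemmas (and no additional one) are sufficient for the absorption step to go through, which is indeed the case because the $\alpha/4-\alpha/8=\alpha/8$ cancellation is independent of $\alpha$.
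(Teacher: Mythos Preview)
Your proposal is correct and follows essentially the same approach as the paper: sum and telescope Lemma~\ref{le:descent}, plug in Lemma~\ref{le:overall_con} (multiplied by $\alpha$) to replace the consensus term, absorb the resulting $\alpha/8$ into the $\alpha/4$, and divide through by $\alpha T/8$. The constants match exactly and no additional step-size condition beyond those from the two lemmas is needed, just as you note.
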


\begin{proof}
    Take full expectation of Lemma \ref{le:descent} and sum over $t$, we have
    \begin{align*}
        \frac{\alpha}{4} \frac{\sum_{t=0}^{T-1} \E \|\nabla f(\bar x^t)\|^2}{T} \le \frac{f^0 - f^*}{T} + \frac{2\alpha L^2}{n} \frac{\sum_{t=0}^{T-1} E_t}{T} + \frac{\alpha^2L}{n}(\frac{\sigma^2}{2} + M\zeta^2)
    \end{align*}  
    By Lemma \ref{le:overall_con}, we have 
    \begin{align*}
        \frac{\alpha}{4} \frac{\sum_{t=0}^{T-1} \E \|\nabla f(\bar x^t)\|^2}{T} &\le \frac{f^0 - f^*}{T} + \frac{\alpha}{8} \frac{\sum_{t=0}^{T-1} \E \|\nabla f(\bar x^t)\|^2}{T}  + \frac{128L^2}{n}\frac{D\tau}{p}\alpha^3 + \frac{\alpha^2L}{n}(\frac{\sigma^2}{2} + M\zeta^2).
    \end{align*}
    Rearrange the terms, we concluded.
\end{proof}

\begin{corollary}[Corollary \ref{coro:1}]
Given identical assumptions in Theorem \ref{th:overall_con}, let $\alpha = \sqrt\frac{n}{T},$ we have 
\begin{align*}
    \frac{\sum_{t=0}^{T-1} \E \|\nabla f(\bar x^t)\|^2}{T} &\le \frac{8(f^0 - f^*)}{\sqrt{nT}} + \frac{8L(\frac{\sigma^2}{2} + M\zeta^2)}{\sqrt{nT}}+ \frac{6144n L^2\tau((2\tau+M))\zeta^2 + \sigma^2)}{p^2T},
\end{align*}
given
    \begin{align*}
        T \ge \max\{nL^2, \frac{16L^2M^2(P+1)^2}{n}, \frac{L^2M^2}{n}, \frac{768nL^2\tau(2\tau+M)}{p^2}, \frac{24576nL^2\tau(2\tau+M)(P+1)}{p^2}\}.
    \end{align*}
\end{corollary}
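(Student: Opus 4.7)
The plan is to obtain Corollary \ref{coro:1} directly from Theorem \ref{th:overall_con} by substituting the specific step size $\alpha = \sqrt{n/T}$ and checking that the lower bound on $T$ ensures every upper bound on $\alpha$ in the theorem is satisfied. No new probabilistic or consensus-error argument is needed; the corollary is a quantitative specialization.

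First I would evaluate each of the three terms in Theorem \ref{th:overall_con} at $\alpha=\sqrt{n/T}$. The term $\tfrac{8(f^0-f^*)}{\alpha T}$ becomes $\tfrac{8(f^0-f^*)}{\sqrt{nT}}$; the term $\tfrac{8\alpha L}{n}\bigl(\tfrac{\sigma^2}{2}+M\zeta^2\bigr)$ becomes $\tfrac{8L(\sigma^2/2+M\zeta^2)}{\sqrt{nT}}$; and $\tfrac{1024L^2}{n}\tfrac{D\tau}{p}\alpha^2$ collapses to $\tfrac{1024L^2 D\tau}{pT}$, after which substituting $D=\tfrac{6((2\tau+M)n\zeta^2+n\sigma^2)}{p}$ yields $\tfrac{6144\,nL^2\tau\bigl((2\tau+M)\zeta^2+\sigma^2\bigr)}{p^2 T}$. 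Summing the three gives exactly the stated bound.

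Second, I would verify admissibility of the step size. Each constraint of the form $\sqrt{n/T}\le A$ is equivalent to $T\ge n/A^2$, so I translate the five conditions of Theorem \ref{th:overall_con} one at a time: $\alpha\le 1/L$ gives $T\ge nL^2$; $\alpha\le n/(4LM(P+1))$ gives $T\ge 16L^2M^2(P+1)^2/n$; $\alpha\le n/(LM)$ gives $T\ge L^2M^2/n$; $\alpha\le p/(16L\sqrt{3\tau(2\tau+M)})$ gives $T\ge 768\,nL^2\tau(2\tau+M)/p^2$; and $\alpha\le \tfrac{1}{32L}\sqrt{pn/(2C\tau)}$ gives $T\ge 2048L^2C\tau/p$, which upon substituting $C=\tfrac{12(2\tau+M)n(P+1)}{p}$ becomes $T\ge 24576\,nL^2\tau(2\tau+M)(P+1)/p^2$. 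Taking the maximum across these five thresholds reproduces the stated lower bound on $T$.

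The main obstacle is not mathematical but bookkeeping: I need to carry the constants $C$ and $D$ through without algebraic slips, since each constraint threshold depends on several problem-dependent parameters ($L, M, P, \zeta, \sigma, p, \tau, n$), and a mis-squared factor would shift a threshold by a power of two or by a factor of $n$. Provided the substitutions above are tracked carefully, the corollary follows without any additional argument beyond Theorem \ref{th:overall_con}.
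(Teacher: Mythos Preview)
Your proposal is correct and follows exactly the same approach as the paper: substitute $\alpha=\sqrt{n/T}$ into the bound of Theorem~\ref{th:overall_con} and then invert each of the five step-size constraints into a lower bound on $T$. The paper's proof is terser but identical in structure, and your constant tracking (including the substitution of $C$ and $D$) matches the stated thresholds line by line.
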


\begin{proof}
Substitute $\alpha = \sqrt\frac{n}{T}$ into Theorem \ref{th:overall_con}, we have
    \begin{align*}
        \frac{\sum_{t=0}^{T-1} \E \|\nabla f(\bar x^t)\|^2}{T} &\le \frac{8(f^0 - f^*)}{\sqrt{nT}} + \frac{8L(\frac{\sigma^2}{2} + M\zeta^2)}{\sqrt{nT}}+ \frac{6144n L^2\tau((2\tau+M))\zeta^2 + \sigma^2)}{p^2T} \\
        &= \mc{O}(\frac{1}{\sqrt{nT}} + \frac{n}{T}).
    \end{align*}

    To satisfy the step size rule, we need
    \begin{align*}
        \sqrt\frac{n}{T} \le \min\{\frac{1}{L}, \frac{n}{4LM(P+1)},\frac{n}{LM},\frac{p}{16L\sqrt{3\tau(2\tau+M)}}, \frac{1}{32L} \sqrt{\frac{pn} {2C\tau}}\},
    \end{align*}
    which results in 
    \begin{align*}
        T \ge \max\{nL^2, \frac{16L^2M^2(P+1)^2}{n}, \frac{L^2M^2}{n}, \frac{768nL^2\tau(2\tau+M)}{p^2}, \frac{24576nL^2\tau(2\tau+M)(P+1)}{p^2}\}.
    \end{align*}
\end{proof}

\section{More on Experiments}
\label{sec:more_exp}
Due to space constraints, the main text presents only limited experimental results. This section provides a comprehensive analysis to complement those findings.

\subsection{Compute Resources}
\label{sec:com_resource}
The experiments are conducted on a server with 8 NVIDIA 3090 GPU, each with 24GB memory. The overall experiment takes roughly over 1500 GPU hours. The heavy computation is primarily from the wide range of local steps $\tau$, as we investigated 8 possible values, and we conducted both homogeneous and heterogeneous cases for VGG11 and ResNet18 training.

\subsection{Iteration-wise Convergence Comparison}
To extend the simulated wall-clock time analysis in Section~\ref{sec:alg_comp}, we investigate the iteration-wise convergence of Overlapping Local Decentralized SGD (OLDSGD) against baselines, including Local DSGD (LDSGD), KGT, LUGT, Local SGD (LSGD), and Local Exact Diffusion (LED). Figures~\ref{fig:ite_conv_VGG}, \ref{fig:ite_conv_resnet}, and \ref{fig:ite_conv_gpt} illustrate the training loss, test accuracy, and test perplexity for VGG11, ResNet18, and GPT2 finetuning on CIFAR-10 and language datasets, respectively, under homogeneous and heterogeneous data distributions. As noted, GPT2 finetuning was conducted only under homogeneous settings due to computational constraints. For clarity, we focus on results for local steps \(\tau \in \{1, 3, 5, 10\}\), though experiments with larger \(\tau\) reveal consistent trends. The analysis confirms OLDSGD’s theoretical equivalence to LDSGD, highlights the stability of gradient descent-based methods, demonstrates significant speedups with increasing local steps, and reveals OLDSGD’s superior test performance, particularly for GPT2, underscoring its potential for modern transformer-based models.

For VGG11 and ResNet18 training, OLDSGD exhibits iteration-wise convergence closely aligned with LDSGD, particularly for smaller local steps, corroborating Theorem \ref{th:overall_con}, which establishes identical convergence rates under non-convex settings due to OLDSGD’s SGD-like average update. At \(\tau = 1\) both algorithms achieve nearly identical training losses after several hundred iterations in homogeneous settings, with minimal differences. For \(\tau \in \{5,10\}\), losses remain comparable, and in heterogeneous settings, OLDSGD tracks LDSGD closely, though slightly trailing at due to staleness errors. As training progresses and gradients diminish, this staleness effect fades, allowing OLDSGD to match LDSGD’s final loss precision, as predicted by its theoretical convergence rate. Test accuracy reflects similar trends, with OLDSGD achieving comparable performance to LDSGD in homogeneous settings but occasionally surpassing it under heterogeneity, particularly at \(\tau = 1\) in ResNet18 training, where OLDSGD yields higher accuracy.

In GPT2 finetuning, OLDSGD consistently outperforms all baselines in test perplexity, showcasing practical advantages despite theoretical equivalence to LDSGD. At \(\tau = 3\), OLDSGD achieves lower perplexity than LDSGD and KGT, with the gap widening for larger \(\tau\). At \(\tau = 5\) and \(\tau = 10\), OLDSGD’s perplexity curves drop more sharply, as shown in Figure~\ref{fig:ite_conv_gpt}, maintaining a clear lead over LDSGD, KGT, and LSGD across all iterations. This consistent superiority, evident for all tested \(\tau\), highlights OLDSGD’s efficacy for transformer-based architectures, where communication bottlenecks are critical. 

Gradient descent-based methods, including OLDSGD, LDSGD, and LSGD, demonstrate remarkable stability across local steps compared to gradient tracking or exact diffusion approaches. For VGG11 and ResNet18, these methods maintain consistent loss reduction and accuracy gains at \(\tau = 10\), with no divergence observed even for larger \(\tau\). In contrast, KGT converges slowly under heterogeneity, with noticeably higher losses, while LUGT diverges entirely at \(\tau = 10\) in heterogeneous VGG11 training, as shown in Figure~\ref{fig:ite_conv_VGG}. LED similarly fails for \(\tau \geq 10\) in both tasks, underscoring the limitations of tracking-based methods \footnote{Tracking based methods also require more memory and communication budget.} despite their theoretical guarantees, such as exact convergence under heterogeneity.

OLDSGD’s speedup with increasing local steps is a key strength, observed across all tasks. In VGG11 and ResNet18 training, OLDSGD reduces losses faster per iteration as \(\tau\) increases, both in homogeneous and heterogeneous settings. For GPT2, the speedup is even more pronounced, with significantly faster perplexity reduction at \(\tau = 10\) compared to smaller \(\tau\). Experiments with larger \(\tau\), such as \(\tau = 20\), confirm this trend, suggesting OLDSGD can handle communication delays far exceeding computation time, making it suitable for real-world applications with substantial network latency.

These findings position OLDSGD as a robust and scalable solution for decentralized training, even in highly heterogeneous and communication-delayed environments. Its convergence closely tracks LDSGD’s, validating theoretical predictions, while its occasional superiority in test accuracy for VGG11 and ResNet18, and consistent dominance in GPT2 perplexity, highlight its practical advantages. The stability of GD-based methods, combined with OLDSGD’s ability to mask communication delays, makes it particularly suited for modern transformer-based models and federated learning scenarios.

\begin{figure*}[htbp]
\centering
\begin{subfigure}[t]{0.4\textwidth}
\centering
\includegraphics[width=\textwidth]{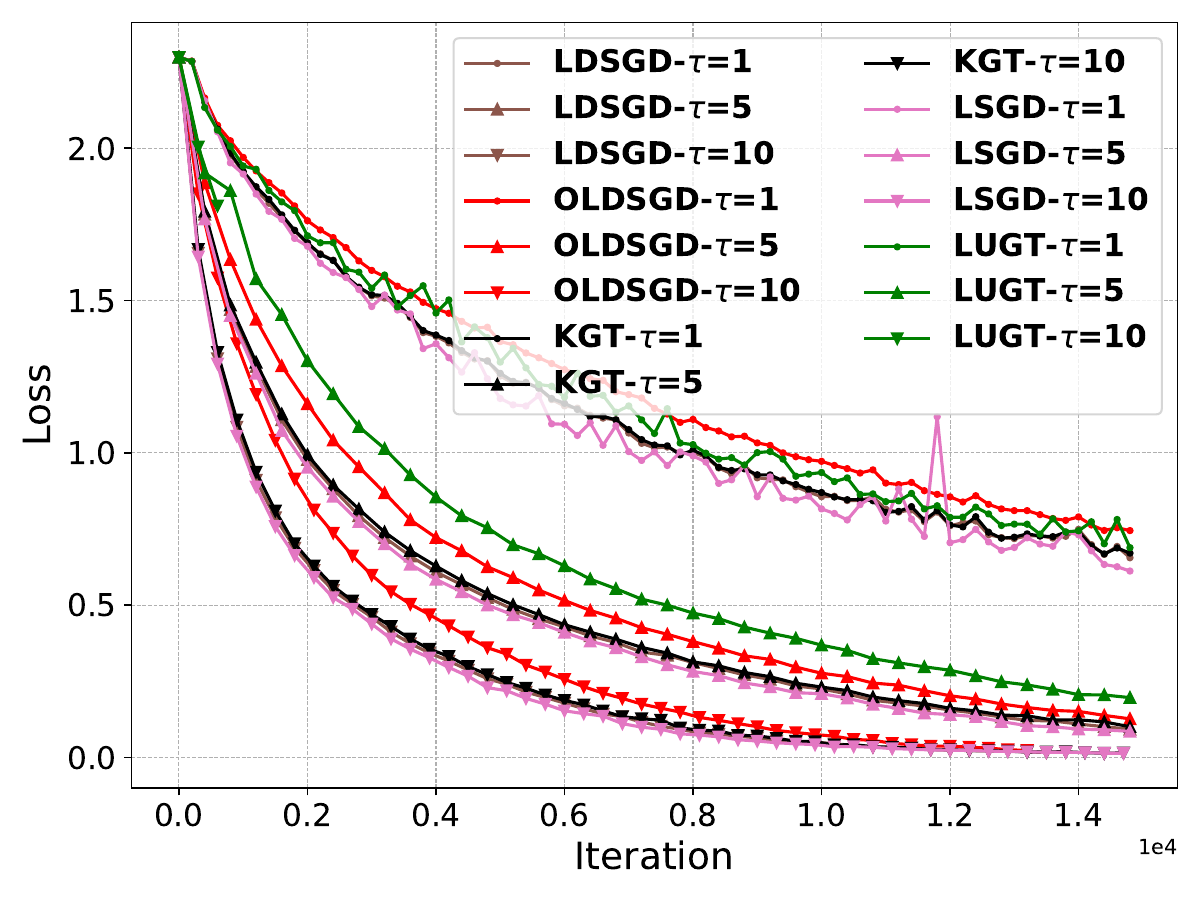}
\caption{Training Loss - Homo.} %
\end{subfigure}
\hspace{-0.2cm}
\begin{subfigure}[t]{0.4\textwidth}
\centering
\includegraphics[width=\textwidth]{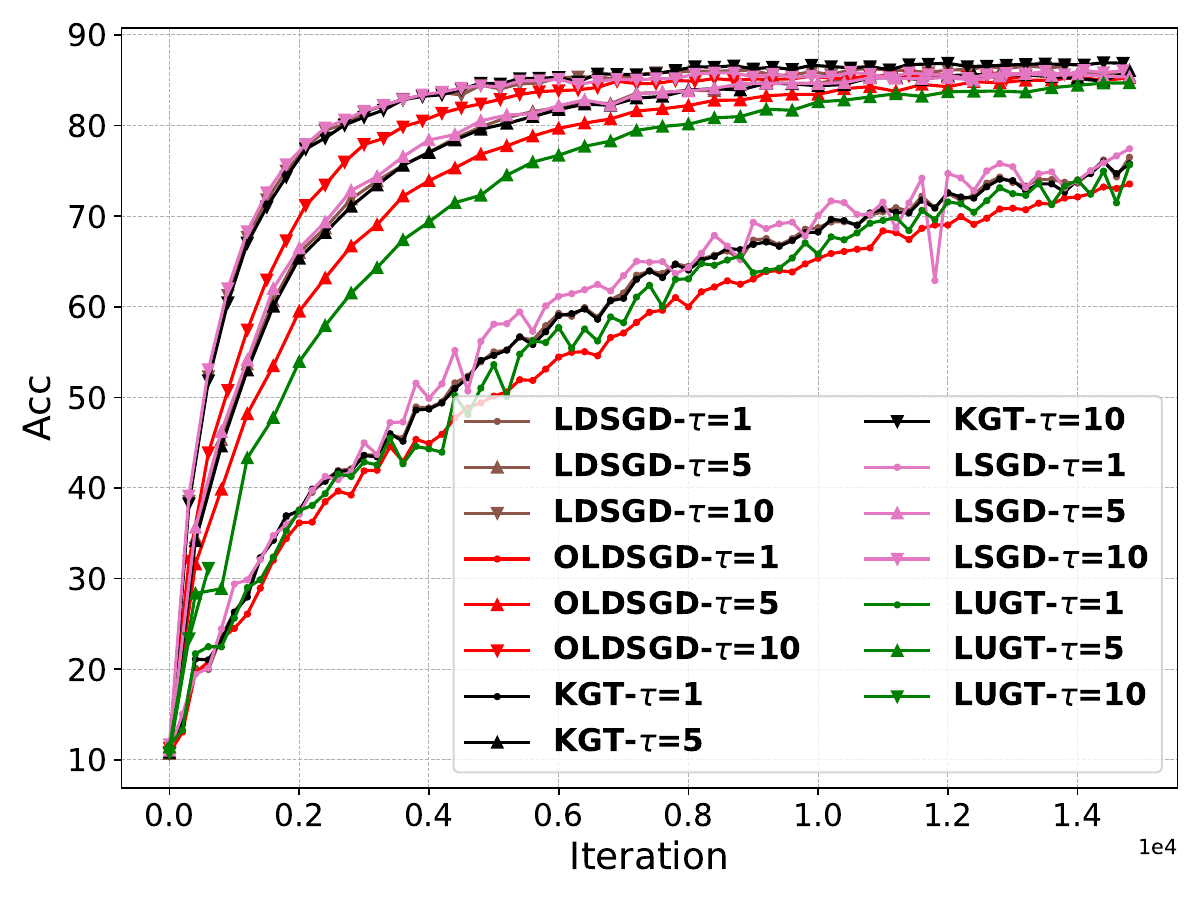}
\caption{Test Accuracy - Homo.}
\end{subfigure}
\begin{subfigure}[t]{0.4\textwidth}
\centering
\includegraphics[width=\textwidth]{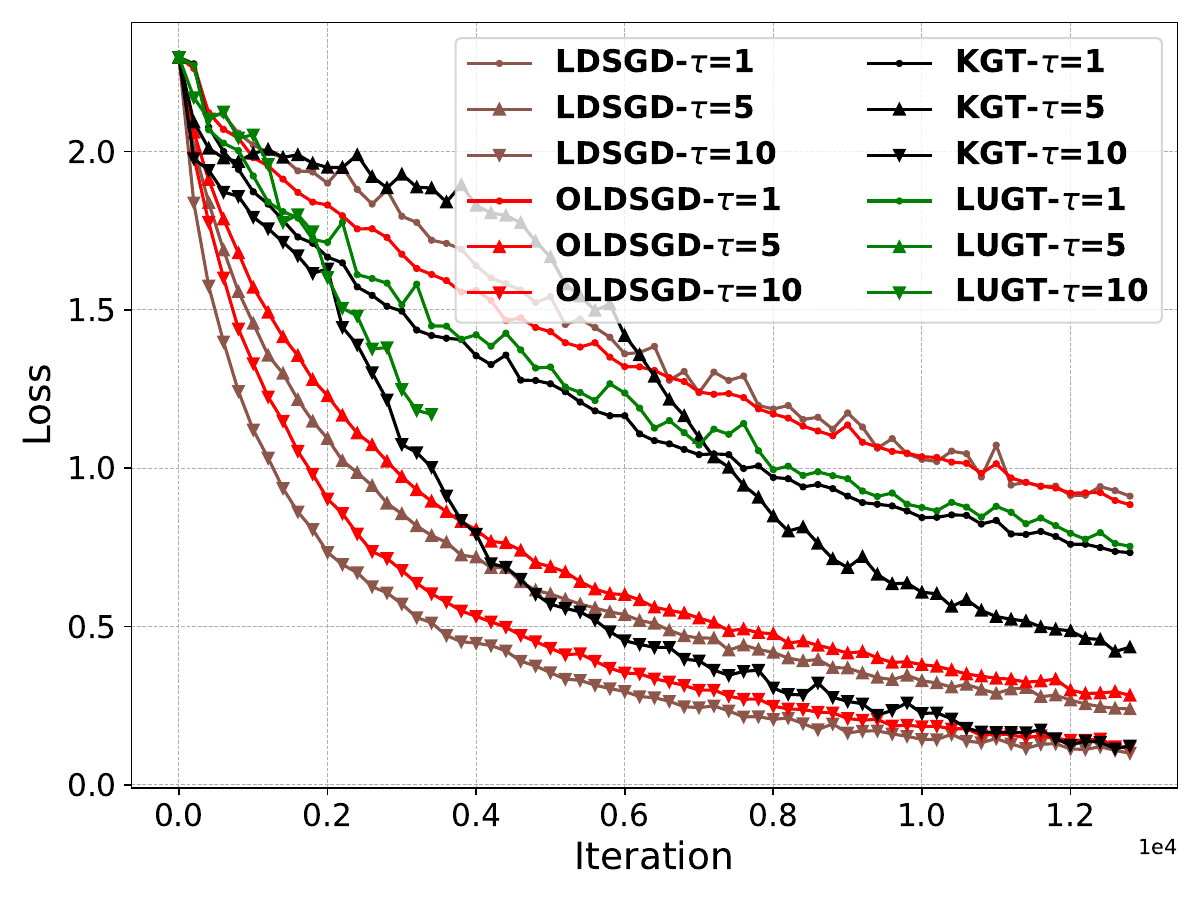}
\caption{Training Loss - Hete.} %
\end{subfigure}
\hspace{-0.2cm}
\begin{subfigure}[t]{0.4\textwidth}
\centering
\includegraphics[width=\textwidth]{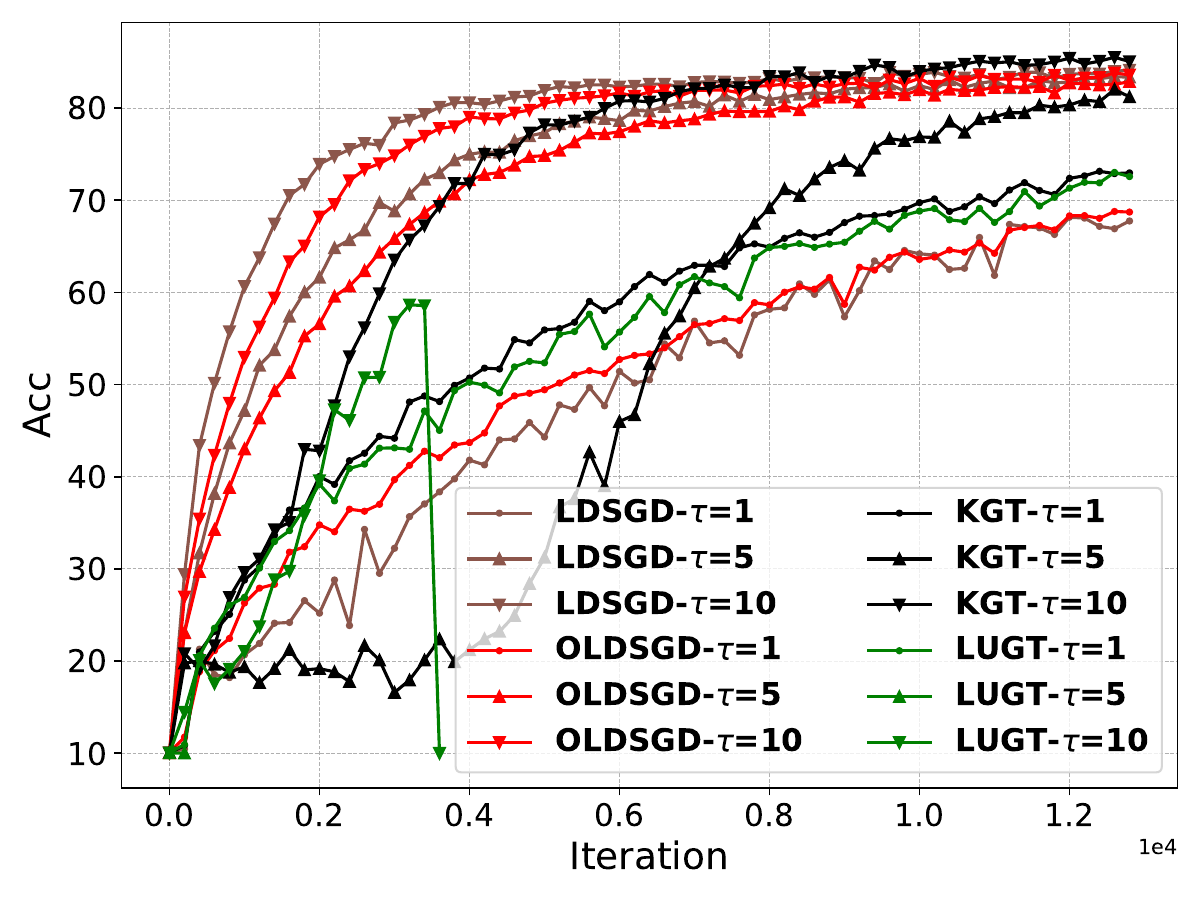}
\caption{Test Accuracy - Hete.}
\end{subfigure}
\caption{Iteration-wise convergence of VGG11 training. Different numbers of local steps are presented.}

\label{fig:ite_conv_VGG}
\end{figure*}

\begin{figure*}[htbp]
\centering
\begin{subfigure}[t]{0.4\textwidth}
\centering
\includegraphics[width=\textwidth]{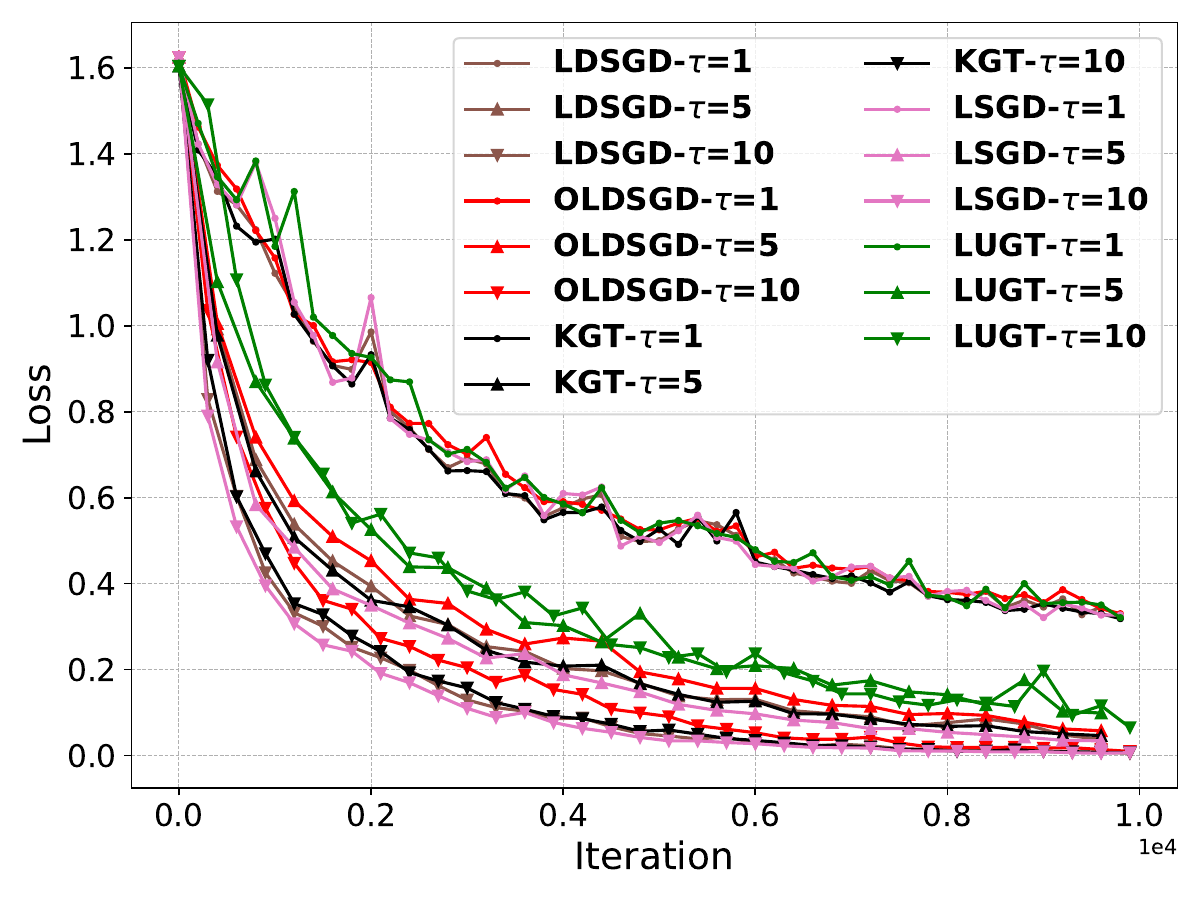}
\caption{Training Loss - Homo.} %
\label{fig:LOG_no_stra_loss}
\end{subfigure}
\hspace{-0.2cm}
\begin{subfigure}[t]{0.4\textwidth}
\centering
\includegraphics[width=\textwidth]{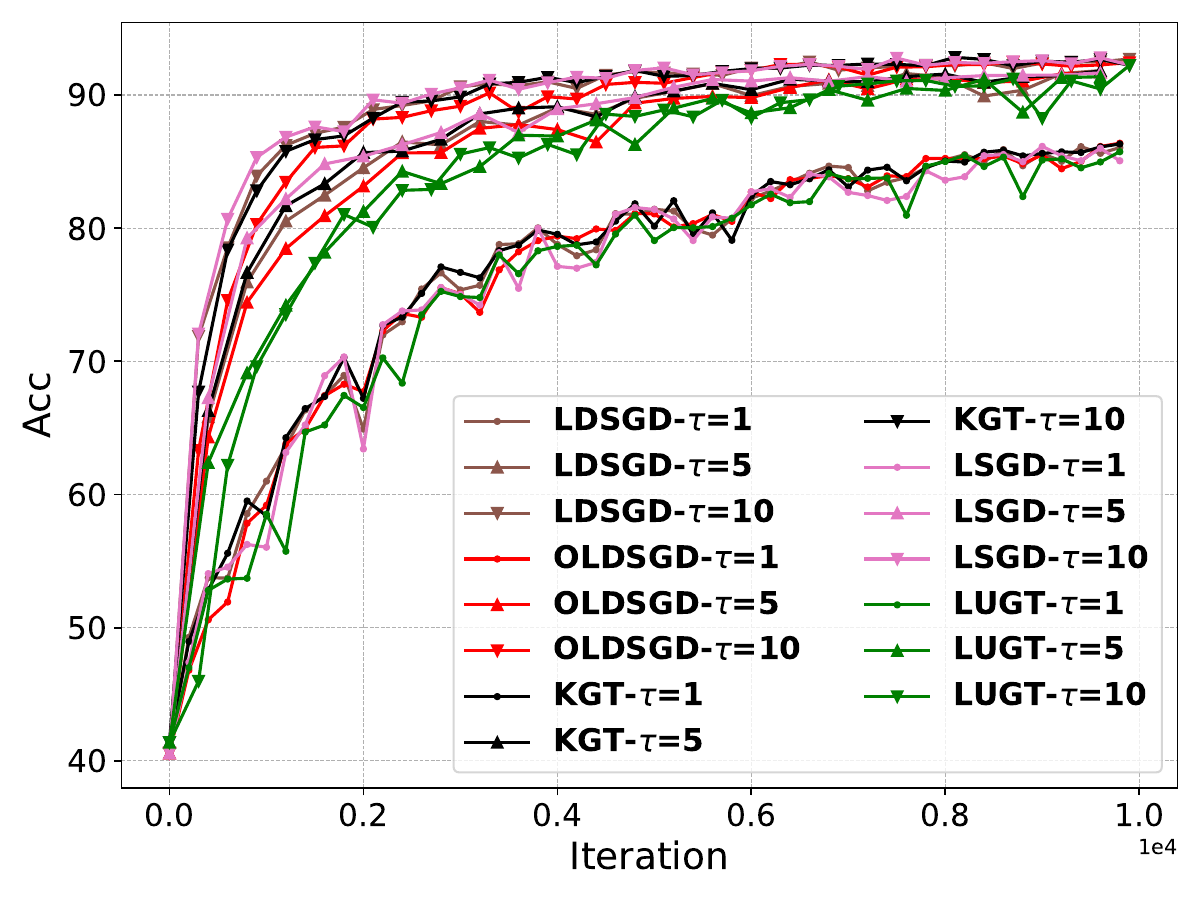}
\caption{Test Accuracy - Homo.}
\label{fig:LOG_stra_loss}
\end{subfigure}
\begin{subfigure}[t]{0.4\textwidth}
\centering
\includegraphics[width=\textwidth]{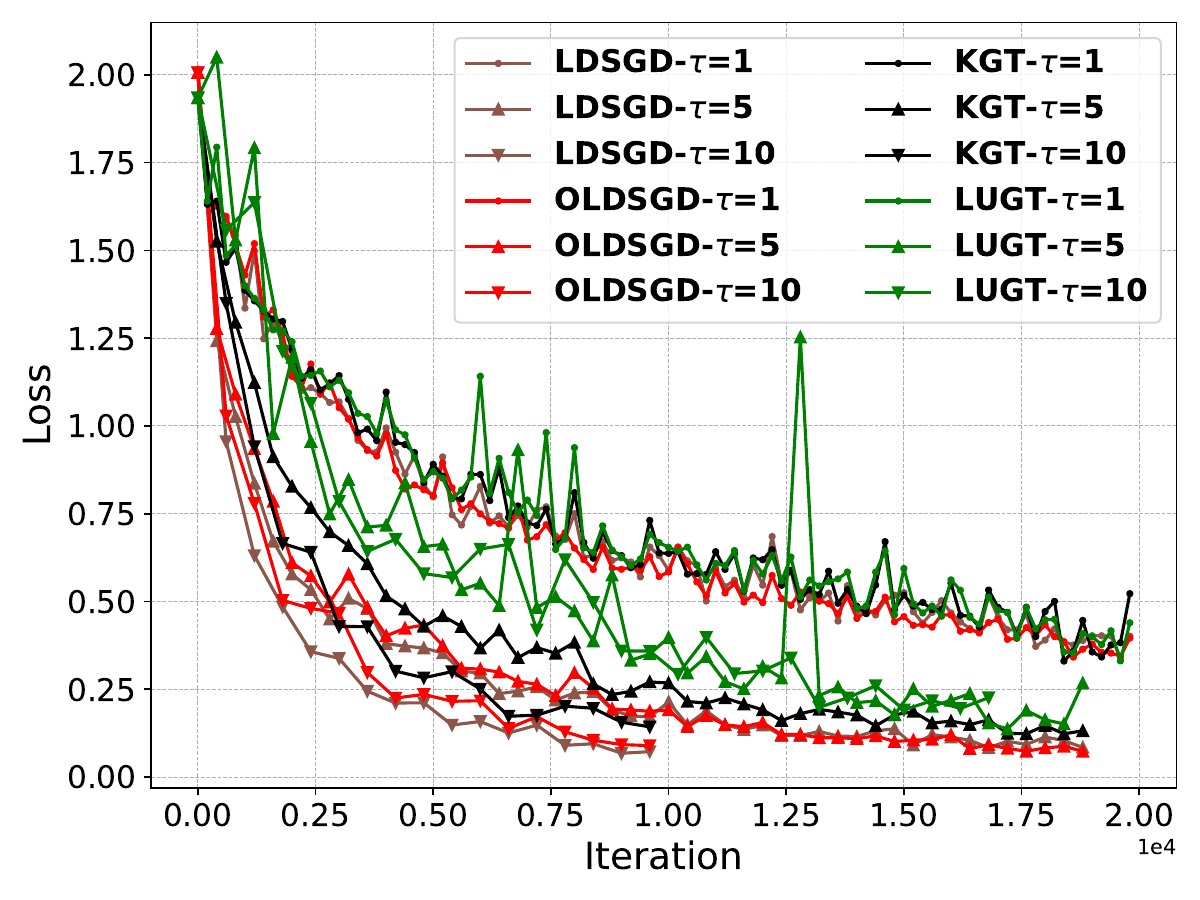}
\caption{Training Loss - Hete.} %
\label{fig:LOG_no_stra_loss}
\end{subfigure}
\hspace{-0.2cm}
\begin{subfigure}[t]{0.4\textwidth}
\centering
\includegraphics[width=\textwidth]{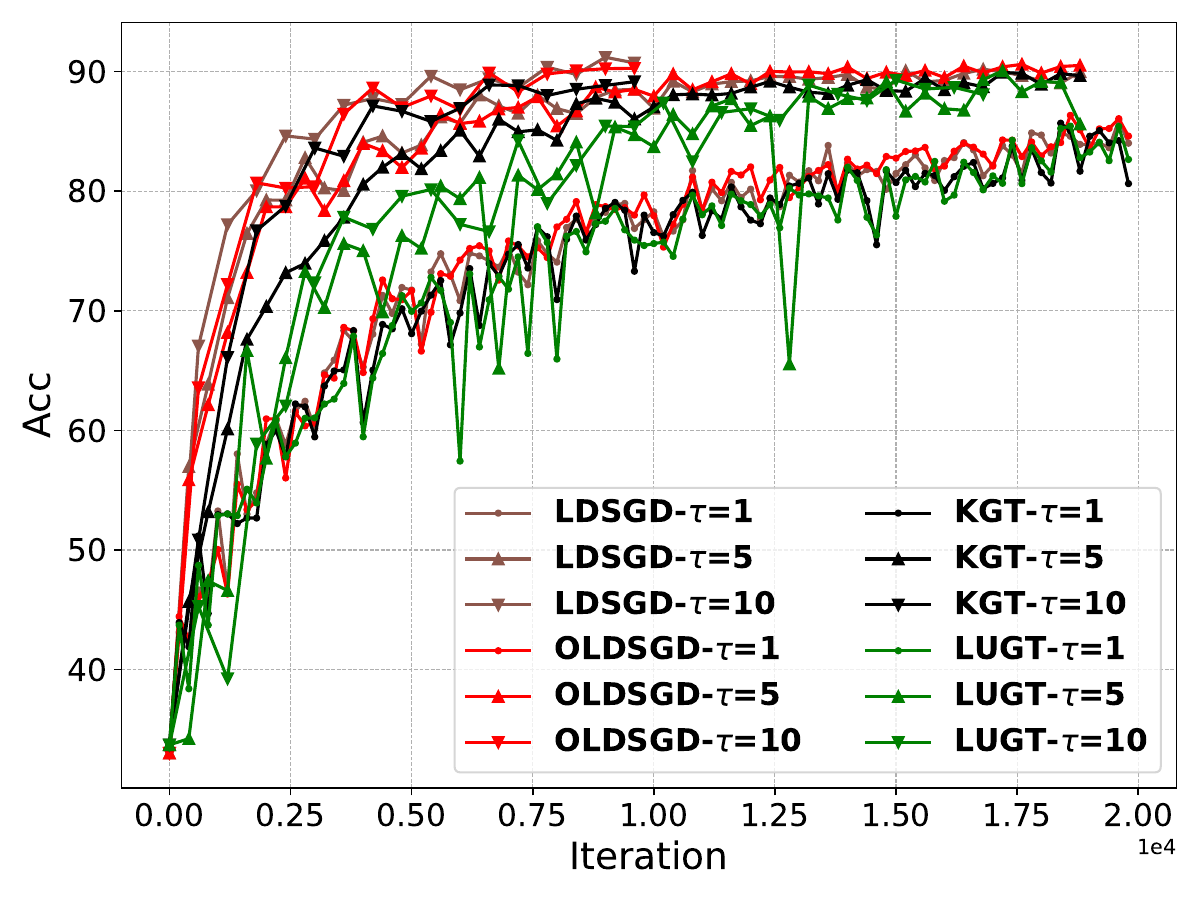}
\caption{Test Accuracy - Hete.}
\label{fig:LOG_stra_loss}
\end{subfigure}
\caption{Iteration-wise convergence of ResNet18 training. Different numbers of local steps are presented.}

\label{fig:ite_conv_resnet}
\end{figure*}

\begin{figure}
    \centering
    \includegraphics[width=0.7\linewidth]{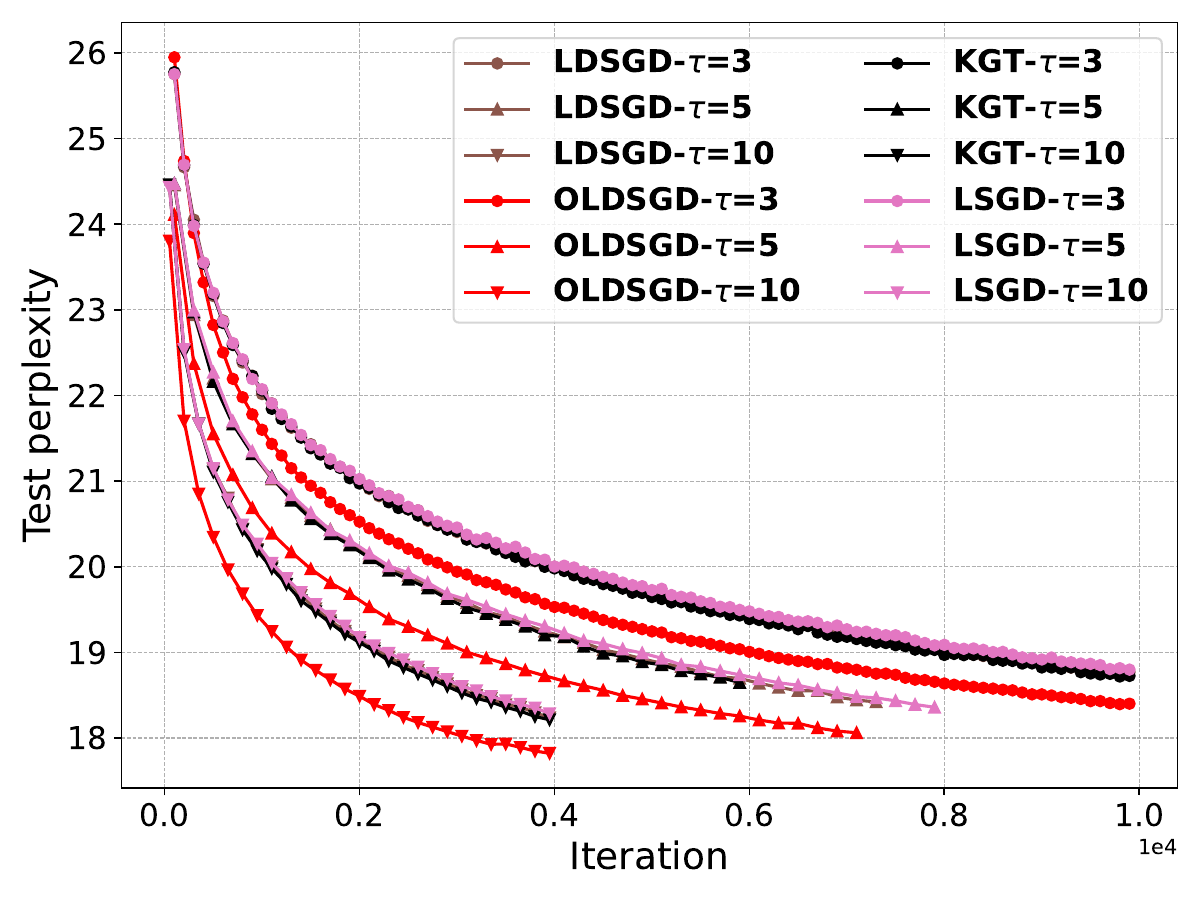}
    \caption{Test perplexity curves of GPT2 finetuning training w.r.t. iteration. OLDSGD constantly achieves lower perplexity than others.}
    \label{fig:ite_conv_gpt}
\end{figure}

\subsection{Speedup on heterogeneous cases}
In the main text, we present the speedup of OLDSGD in the homogeneous setting in Table \ref{tab:speedup}. As demonstrated in the previous subsection, OLDSGD tends to perform better (compared to other methods) under data heterogeneity. The following Table \ref{tab:speedu_hete} quantifies this trend. The last column represents the geometric means of speedup under the homogeneous case, calculated w.r.t. VGG and ResNet training only, for comparison. On average, OLDSGD achieves a 1.64$\times$ speedup compared to LDSGD under heterogeneous data conditions, surpassing the 1.39$\times$ speedup observed under homogeneous data. OLDSGD's speedup w.r.t. KGT and LUGT increases similarly under data heterogeneity. This enhanced performance under data heterogeneity underscores OLDSGD's robustness to variations in data distribution. Such a characteristic is advantageous for practical deployment, particularly in scenarios where homogeneous data is impractical due to constraints such as privacy concerns.

\begin{table}[htbp]
\centering
\caption{OLDSGD's Speedup Compared to Existing Methods (Higher Values Are Better). The last column presents geometric mean of speedup under the homogeneous case (excluding GPT2 finetuning) for comparison.}
\label{tab:speedu_hete}
\begin{tabular}{ccccccc}
\toprule
 & \multicolumn{2}{c}{\textbf{VGG11}} & \multicolumn{2}{c}{\textbf{ResNet18}} & \\
\cmidrule(lr){2-3} \cmidrule(lr){4-5}
\textbf{Algorithm} & $c=1$ & $c=5$ & $c=1$ & $c=5$ & \textbf{GeoMean} & \textbf{Homo. GeoMean}\\
\midrule
LDSGD & 1.41$\times$ & 1.56$\times$ & 1.54$\times$ & 1.71$\times$ & 1.64$\times$ & 1.39$\times$  \\
KGT & 1.34$\times$ & 2.25$\times$ & 2.63$\times$ & 2.25$\times$ & 1.63$\times$ & 1.42$\times$ \\
LUGT & 2.14$\times$ & 6.20$\times$ & 3.41$\times$ & 7.43$\times$ & 4.28$\times$ & 3.05$\times$ \\
\bottomrule
\end{tabular}
\end{table}



\end{document}